\begin{document}

\title{Spectral MLE: \\  Top-$K$ Rank Aggregation from Pairwise Comparisons}

\author{\name Yuxin Chen \email yxchen@stanford.edu \\
       \addr Department of Statistics\\
       Stanford University\\
       Stanford, CA 94305, USA
       \AND
       \name Changho Suh \email chsuh@kaist.ac.kr \\
       \addr Department of Electrical Engineering\\
       Korea Advanced Institute of Science and Technology\\
       Daejeon, Korea}

\editor{}

\maketitle

\begin{abstract}
This paper explores the preference-based top-$K$ rank aggregation problem.
Suppose that a collection of items is repeatedly compared in pairs,
and one wishes to recover a consistent ordering that emphasizes the
top-$K$ ranked items, based on partially revealed preferences. For concreteness, this work focuses on the popular
Bradley-Terry-Luce model that postulates a set of latent
preference scores underlying all items, where the odds of paired comparisons
depend only on the relative scores of the items involved. 

We characterize the minimax limits on the identifiability of top-$K$ ranked items, in
the presence of random and non-adaptive sampling. Our findings highlight a separation
measure that quantifies the gap of preference scores between the $K^{\text{th}}$
and $(K+1)^{\text{th}}$ ranked items. In an attempt to approach
this minimax limit, we propose a nearly linear-time ranking scheme,
called \emph{Spectral MLE}, that returns the indices of the top-$K$
items in accordance to a careful score estimate. In a nutshell, Spectral
MLE starts with an initial score estimate with minimal squared loss
(obtained via a spectral method), and then successively refines each
component with the assistance of coordinate-wise MLEs. Encouragingly,
Spectral MLE achieves perfect top-$K$ item identification with minimal sample
complexity. The practical applicability of Spectral MLE is further corroborated by
numerical experiments.
\end{abstract}

\begin{keywords}
   Bradley-Terry-Luce model, top-$K$ ranking, linear-time algorithm, minimax limits, coordinate-wise MLE
\end{keywords}

\section{Introduction and Motivation}



The task of rank aggregation is encountered in a wide spectrum of
contexts like social choice \citep{caplin1991aggregation,AzariParkesXia2014},
web search and information retrieval \citep{brin1998anatomy,Dwork2001}, crowd sourcing
\citep{chen2013pairwise}, recommendation systems \citep{baltrunas2010group},
to name just a few. Given partial preference information over a collection
of items, the aim is to identify a consistent ordering that best respects
the revealed preference. In the high-dimensional regime, one is often
faced with two challenges: (1) the number of items to be ranked is
ever growing, which makes it increasingly harder to recover a consistent total ordering over
all items; (2) the observed data is highly incomplete and inconsistent:
only a small number of noisy pairwise / listwise preferences can be acquired.

In an effort to address such challenges, this paper explores a popular
pairwise preference-based model, which postulates the existence of
a ground-truth ranking. Specifically, consider a parametric model
involving $n$ items, each assigned a preference score that determines
its rank. Concrete examples of preference scores include the overall
rating of an athlete, the academic performance and competitiveness
of a university, the dining quality of a restaurant, etc. Each item
is then repeatedly compared against a few others in pairs, yielding
a set of \emph{noisy} binary comparisons generated based on the relative
preference scores. In many situations, the number of repeated comparisons
essentially reflects the signal-to-noise ratio (SNR) or the quality of
the information revealed for each pair of items. The goal is then
to develop a ``denoising'' procedure that recovers the ground-truth
ranking, ideally from a minimal number of noisy observations.

There has been a proliferation of ranking schemes 
that suggest partial solutions. While the ranking that we are seeking
is better treated as a function of the preference parameters, most
of the aforementioned schemes adopt the natural ``plug-in'' procedure,
that is, start by inferring the preference scores, and then return
a ranking in accordance to the parametric estimates. The most popular
paradigm is arguably the maximum likelihood estimation (MLE) \citep{ford1957solution},
where the main appeal of MLE is its inherent convexity under several
comparison models, e.g. the Bradley-Terry-Luce (BTL) model~\citep{bradley1952rank,luce1959individual}.
Encouragingly, MLE often achieves low $\ell_{2}$ estimation loss
while retaining efficient finite-sample complexity.
Another prominent alternative concerns a family of spectral ranking
algorithms (e.g. \emph{PageRank}~\citep{brin1998anatomy}). A provably
efficient choice within this family is \emph{Rank Centrality }\citep{Negahban2012},
which produces an estimate with nearly minimax mean squared error
(MSE). While both MLE and Rank Centrality allow intriguing guarantees
towards finding faithful parametric estimates, the squared loss metric
considered therein does not necessarily imply optimality of the ranking
accuracy. In fact, there is no shortage of high-dimensional situations
that admit parametric estimates with low squared loss while precluding
reliable ranking. Furthermore, many realistic scenarios
emphasize only a few items that receive the highest ranks. Unfortunately,
the above MSE results fall short of ensuring recovery of the top-ranked items.

In this work, we consider accurate identification of top-$K$ ranked
items under the popular BTL pairwise comparison model, assuming that
the item pairs we can compare are selected in a random and non-adaptive
fashion (termed \emph{passive ranking}). In particular, we aim to explore
 the following questions: (a) what is the minimum number of
repeated comparisons necessary for reliable ranking? (b) how is the
ranking accuracy affected by the underlying preference score distributions?
We will address these two questions from both statistical and algorithmic
perspectives.

\subsection{Main Contributions\label{sub:Contributions}}

This paper investigates minimax optimal procedures for top-$K$ rank
aggregation. Our contributions are two-fold.

To begin with, we characterize the fundamental three-way tradeoff between
the number of repeated comparisons, the sparsity of the comparison
graph, and the preference score distribution, from a minimax perspective.
In particular, we emphasize a separation measure that quantifies the
gap of preference scores between the $K^{\text{th}}$ and $(K+1)^{\text{th}}$ ranked items.
Our results demonstrate that the minimal sample complexity or 
quality of paired evaluation (reflected by the number of repeated
comparisons per an observed pair) scales inversely with the separation
measure at a quadratic rate.

Algorithmically, we propose a nearly linear-time two-stage procedure, called \emph{Spectral
MLE}, which allows perfect top-$K$ identification as soon as the sample
complexity exceeds the minimax limits (modulo some constant).
Specifically, Spectral MLE starts by obtaining careful initial scores
that are faithful in the $\ell_{2}$ sense (e.g. via a spectral ranking method),
and then iteratively sharpens the pointwise estimates via a careful comparison between
the running iterates and the coordinate-wise MLEs. This algorithm is designed primarily
in an attempt to seek a score estimate with minimal pointwise loss which, in turn, guarantees optimal ranking accuracy. 
  Furthermore, numerical experiments
demonstrate that Spectral MLE outperforms Rank Centrality by achieving lower $\ell_{\infty}$ estimation error and higher ranking accuracy. 


\subsection{Prior Art\label{sub:PriorArt}}

There are two distinct families of observation models that have received
considerable interest: (1) \emph{value-based model}, where the observation
on each item is drawn only from the distribution underlying this individual;
(2) \emph{preference-based model}, where one observes the relative order among a few items instead of revealing
their individual values. Best-$K$ identification in the value-based
model with adaptive sampling (termed {\em active ranking}) is closely related to the multi-armed
bandit problem, where the fundamental identification complexity has
been characterized \citep{gabillon2011multi,bubeck2013multiple,jamieson2013lil}. In addition, the value-based and preference-based models have been compared in terms of minimax error rates in estimating the latent quantities; see \citep{shah2014better}.

In the realm of pairwise preference settings, many \emph{active ranking}
schemes \citep{busa2014survey} have been proposed in an attempt to
optimize the exploration-exploitation tradeoff. For instance, in the
noise-free case, \citep{jamieson2011active} considered
perfect total ranking and characterized the query complexity gain
of adaptive sampling relative to random queries, provided that the
items under study admit a low-dimensional Euclidean embedding. Furthermore,
several works \citep{ailon2011active,jamieson2011active,braverman2008noisy, wauthier2013efficient}
explored the query complexity in the presence of noise, but were basically designed to recover
 ``approximately correct'' total rankings---a solution with
loss at most a factor $(1+\epsilon)$ from optimal---rather than accurate ordering. Another path-based approach has been proposed to
accommodate accurate top-$K$ queries from noisy pairwise data \citep{eriksson2013learning},
where the observation error is assumed to be i.i.d. instead of being item-dependent. Motivated by the success of value-based racing
algorithms, \citep{Hullermeier2013topKranking,busa2014survey}
came up with a generalized racing algorithm that often led to efficient
sample complexity. In contrast, the current paper concentrates on
top-$K$ identification in a \emph{passive} setting, assuming that
partial preferences are collected in a noisy, random, and non-adaptive
manner. This was previously out of reach.

Apart from Rank Centrality and MLE,
the most relevant work is \citep{rajkumar2014statistical}.
For a variety of rank aggregation methods, they developed intriguing
sufficient statistical hypotheses that guarantee the convergence to
an optimal ranking, which in turn leads to sample complexity bounds
for Rank Centrality and MLE. Nevertheless, they focused on perfect
total ordering instead of top-$K$ selection, and their results fall
short of a rigorous justification as to whether or not the derived
sample complexity bounds are statistically optimal.

Finally, there are many related yet different problem settings considered
in the prior literature.
For instance, the work \citep{ammar2012efficient} approached
top-$K$ ranking using a maximum entropy principle, assuming the existence
of a distribution $\mu$ over all possible permutations. Recent work  \citep{SoufianiChenParkesXioa2013,soufiani2014computing}
investigated consistent \emph{rank breaking} under more generalized
models involving full rankings.  
A family of distance measures on rankings has been studied and justified based on an axiomatic approach
\citep{farnoud2014axiomatic}.  Another line of works considered the popular distance-based  Mallows model \citep{lu2011learning,busa2014preference,awasthi2014learning}. An online ranking setting has been studied as well \citep{harrington2003online,farnoud2014approximate}. More broadly, the minimax recovery limits under general pairwise measurements have recently been  determined by \citep{chen2015information}.
These are beyond the scope of the present work.

\subsection{Organization and Notation\label{sub:Organization-and-Notation}}

The remainder of the paper is organized as follows. Section \ref{sec:Problem-Setup} introduces the pairwise comparison model as well as the key performance metrics for the top-$K$ ranking task. The main results,  including a fundamental minimax lower limit and an achievability result by nearly linear-time algorithms, are summarized and discussed in Section \ref{sec:Main-Results}.  Section \ref{sec:achievability} presents the detailed procedure and performance guarantees of the proposed Spectral MLE algorithm,  and provides a heuristic treatment as to why it is expected to control the $\ell_{\infty}$ estimation error. We conclude the paper with a summary of our findings and a discussion of about future research directions in  Section \ref{sec:Conclusion}. The proofs of the ranking performance of Spectral MLE (i.e.~Theorem \ref{thm:SpectralMLE}) and the minimax lower bound  (i.e.~Theorem \ref{thm:optimality}) are deferred to Appendix \ref{sec:Proof-SpectralMLE}  and Appendix \ref{sec:converse-proof}, respectively.

Before continuing, we provide a brief summary of the notations used throughout the paper.
Let $[n]$ represent $\left\{ 1,2,\cdots,n\right\} $. 
We denote by $\Vert \boldsymbol{w} \Vert $,
$\Vert \boldsymbol{w} \Vert _{1}$, $\Vert \boldsymbol{w} \Vert _{\infty}$
the $\ell_{2}$ norm, $\ell_{1}$ norm, and $\ell_{\infty}$ norm
of $\boldsymbol{w}$, respectively. 
A graph $\mathcal{G}$
is said to be an Erd\H{o}s\textendash{}R\'enyi random graph, denoted by $\mathcal{G}_{n,p_{{\rm obs}}}$,
if each pair $(i,j)$ is connected by an edge independently with
probability $p_{{\rm obs}}$. Besides, we use $\mathrm{deg}\left(i\right)$
to represent the degree of vertex $i$ in $\mathcal{G}$. 

Additionally,
for any two sequences $f(n)$ and $g(n)$, $f(n)\gtrsim g(n)$ or $f(n)=\Omega(g(n))$ mean
that there exists a constant $c$ such that $f(n)\geq cg(n)$; $f(n)\lesssim g(n)$ or $f(n) = O(g(n))$
mean that there exists a constant $c$ such that $f(n)\leq cg(n)$; and
$f(n)\asymp g(n)$ or $f(n)=\Theta(g(n))$ mean that there exist constants $c_{1}$ and $c_{2}$
such that $c_{1}g(n)\leq f(n)\leq c_{2}g(n)$.

\section{Problem Setup\label{sec:Problem-Setup}}

To formalize matters,  we present mathematical setups and key performance metrics in this section.

\noindent
\textbf{Comparison Model and Assumptions}. Suppose that we observe
a few pairwise evaluations over $n$ items. To pursue a statistical
understanding towards the ranking limits, we assume that the pairwise
comparison outcomes are generated according to the BTL model \citep{bradley1952rank,luce1959individual},
a long-standing model that has been applied in numerous
applications \citep{agresti2014categorical,hunter2004mm}. 
\begin{itemize}[leftmargin=*, listparindent =1em] 
\item \emph{Preference Scores}. The BTL model hypothesizes on the existence
of some hidden preference vector $\boldsymbol{w}=\left[w_{i}\right]_{1\leq i\leq n}$,
where $w_{i}$ represents the underlying preference score~/~weight
of item $i$. The outcome of each paired comparison depends only on
the scores of the items involved. Without loss of generality, we will
assume throughout that 
\begin{equation}
w_{1}\geq w_{2}\geq\cdots\geq w_{n}>0\label{eq:score-order}
\end{equation}
unless otherwise specified.
\item \emph{Comparison Graph}. Denote by $\mathcal{G}=\left(\left[n\right],\mathcal{E}\right)$
the comparison graph such that items $i$ and $j$ are
compared if and only if $(i,j)$ belongs to the edge set $\mathcal{E}$.
We will mostly assume that $\mathcal{G}$ is drawn
from the Erd\H{o}s\textendash{}R\'enyi model $\mathcal{G}\sim\mathcal{G}_{n,p_{\mathrm{obs}}}$
for some observation ratio $p_{\mathrm{obs}}$. 
\item \emph{(Repeated) Pairwise Comparisons}. For each $\left(i,j\right)\in\mathcal{E}$,
we observe $L$ independent paired comparisons between items $i$ and $j$. The
outcome of the $l^{\text{th}}$ comparison between them, denoted by
$y_{i,j}^{\left(l\right)}$, is generated as per the BTL model:
\begin{equation}
y_{i,j}^{\left(l\right)}=\begin{cases}
1,\quad & \text{with probability }\frac{w_{i}}{w_{i}+w_{j}},\\
0, & \text{else},
\end{cases}\label{eq:BTL}
\end{equation}
where $y_{i,j}^{\left(l\right)}=1$ indicates a win by $i$ over $j$.
We adopt the convention that $y_{j,i}^{(l)}=1-y_{i,j}^{\left(l\right)}.$
It is assumed throughout that conditional on $\mathcal{G}$, the $y_{i,j}^{\left(l\right)}$'s
are jointly independent across all $l$ and $i>j$. For ease of presentation,
we introduce the collection of sufficient
statistics as 
\begin{equation}
\boldsymbol{y}_{i}:=\left\{ y_{i,j}\mid j:\left(i,j\right)\in\mathcal{E}\right\}; \qquad y_{i,j}:=\frac{1}{L}\sum_{l=1}^{L}y_{i,j}^{\left(l\right)}.
\label{eq:sufficient_stats}
\end{equation}


\item \emph{Signal to Noise Ratio (SNR)~/~Quality of Comparisons}. The overall
faithfulness of the acquired evaluation between items $i$ and $j$
is captured by the sufficient statistic $y_{i,j}$.
Its SNR can be captured by 
\begin{align}
\mathsf{SNR}: & = \frac{\mathbb{E}^{2}\left[y_{i,j}\right]} {{{\bf Var}\left[y_{i,j}\right]}}~\asymp~L.\label{eq:defn-SNR}
\end{align}
As a result, the number $L$ of repeated comparisons measures the
SNR or the \emph{quality} of comparisons over any observed pair of items. 
\item \emph{Dynamic Range of Preference Scores}. It is assumed throughout
that the dynamic range of the preference scores is fixed irrespective
of $n$, namely, 
\begin{equation}
w_{i}\in\left[w_{\min},w_{\max}\right],\quad\quad1\leq i\leq n\label{eq:DynamicRange}
\end{equation}
for some positive constants $w_{\min}$ and $w_{\max}$ bounded
away from 0, which amounts to the most challenging regime \citep{Negahban2012}. 
In fact, the case in which the range ${w_{\max}}/{w_{\min}}$
grows with $n$ can be readily translated into the above fixed-range regime by first
separating out those items with vanishing scores (e.g. via a simple voting method
like Borda count \citep{ammar2011ranking}). 
 
\end{itemize}
\textbf{Performance Metric}. Given these pairwise observations,
one wishes to see whether or not the top-$K$ ranked items
are identifiable. To this end, we consider the probability of error $P_{\mathrm{e}}$ in isolating the {\em set} of top-$K$ ranked items, i.e.
\begin{align}
P_{\mathrm{e}}\left(\psi\right) & :=\mathbb{P}\Big\{\psi\left(\boldsymbol{y}\right)\neq[K]\Big\},\label{eq:prob-error}
\end{align}
where $\psi$
is any ranking scheme that returns a set of $K$ indices.
Here, $\left[K\right]$ denotes the (unordered)\emph{ }set
of the first $K$ indices. We aim to characterize the fundamental
\emph{admissible} \emph{region} of $(L,p_{{\rm obs}})$ where
reliable top-$K$ ranking is feasible, i.e. $P_{{\rm e}}$
can be vanishingly small as $n$ grows.

\section{Minimax Ranking Limits\label{sec:Main-Results}}

We explore the fundamental ranking limits from a minimax perspective,
which centers on the design of \emph{robust} ranking schemes that
guard against the worst case in probability of error. The most challenging
component of top-$K$ rank aggregation often hinges upon distinguishing
the two items near the decision boundary, i.e. the $K^{\text{th}}$ and $(K+1)^{\text{th}}$ ranked items.
Due to the random nature of the acquired finite-bit comparisons, the information
concerning their relative preference could be obliterated by noise,
unless their latent preference scores are sufficiently separated.
In light of this, we single out a preference separation measure as
follows 
\begin{align}
\Delta_{K} & :=\frac{w_{K}-w_{K+1}}{w_{\max}}.\label{eq:separation}
\end{align}
As will be seen, this measure plays a crucial role in determining
information integrity for top-$K$ identification.

To model random sampling and partial observations, we employ the Erd\H{o}s\textendash{}R\'enyi
random graph $\mathcal{G}\sim\mathcal{G}_{n,p_{\mathrm{obs}}}$. As already
noted by \citep{ford1957solution}, if the comparison
graph $\mathcal{G}$ is not connected, then there is absolutely no
basis to determine relative preferences between two disconnected components.
Therefore, a reasonable necessary condition that one would expect
is the connectivity of $\mathcal{G}$, which requires 
\begin{equation}
p_{\mathrm{obs}}> \frac{\log n}{n}.\label{eq:Connectivity}
\end{equation}
All results presented in this paper will operate under this assumption.

A main finding of this paper is an order-wise tight sufficient condition for top-$K$ identifiability, as stated in the theorem below.

\begin{theorem}[\textbf{Identifiability}] \label{thm:achievability}
Suppose that ${\cal G}\sim{\cal G}_{n,p_{{\rm obs}}}$ with $p_{{\rm obs}}\geq c_{0}\log n/n$. Assume that $L=O\left(\mathrm{poly}\left(n\right)\right)$
and ${w_{\max}}/{w_{\min}} = \Theta(1)$. With probability
exceeding $1-c_{1}n^{-2}$, the set of top-$K$ ranked items can be
identified exactly by an algorithm that runs in time $O\left(\left|\mathcal{E}\right|\log^{2}n\right)$,
provided that 
\begin{align}
L & \text{ }\geq\text{ }\frac{c_{2}\log n}{np_{{\rm obs}}\Delta_{K}^{2}}.\label{eq:achievability}
\end{align}
Here, $c_0,c_{1},c_{2}>0$ are some universal constants. 
\end{theorem}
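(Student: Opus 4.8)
The plan is to reduce exact top-$K$ recovery to a pointwise ($\ell_{\infty}$) estimation guarantee, and then to show that Spectral MLE delivers the required pointwise accuracy at exactly the sample complexity \eqref{eq:achievability}. First I would record the elementary reduction. Suppose the algorithm produces a score estimate $\widehat{\boldsymbol{w}}$ and returns the indices of its $K$ largest entries. Every top-$K$ item has true score at least $w_{K}$, while every remaining item has true score at most $w_{K+1}$; hence the sorted output equals $[K]$ whenever
\[
\|\widehat{\boldsymbol{w}} - \boldsymbol{w}\|_{\infty} < \frac{w_{K} - w_{K+1}}{2} = \frac{\Delta_{K}\, w_{\max}}{2},
\]
since under this bound the estimated scores of the top-$K$ and of the bottom-$(n-K)$ items fall on opposite sides of the midpoint $\tfrac{1}{2}(w_{K}+w_{K+1})$. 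Thus it suffices to control the $\ell_{\infty}$ loss.

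Second, I would fix the target rate. The aim is to prove that, with probability exceeding $1 - c_{1}n^{-2}$,
\[
\frac{\|\widehat{\boldsymbol{w}} - \boldsymbol{w}\|_{\infty}}{w_{\max}} \;\lesssim\; \sqrt{\frac{\log n}{n\, p_{\mathrm{obs}}\, L}} .
\]
Combined with the reduction, the condition $\|\widehat{\boldsymbol{w}} - \boldsymbol{w}\|_{\infty} < \tfrac{1}{2}\Delta_{K} w_{\max}$ then holds precisely when $\sqrt{\log n/(n p_{\mathrm{obs}} L)} \lesssim \Delta_{K}$, i.e. $L \gtrsim \log n/(n\, p_{\mathrm{obs}}\, \Delta_{K}^{2})$, which is exactly \eqref{eq:achievability} (using $w_{\max},w_{\min}=\Theta(1)$). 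So the entire problem boils down to this uniform pointwise bound.

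Third, the $\ell_{\infty}$ bound is established in two stages mirroring the algorithm. In the initialization, I would invoke the Rank Centrality analysis \citep{Negahban2012}: once $\mathcal{G}\sim\mathcal{G}_{n,p_{\mathrm{obs}}}$ is connected and its degrees concentrate around $n p_{\mathrm{obs}}$ (guaranteed by $p_{\mathrm{obs}}\geq c_{0}\log n/n$), the spectral estimate $\widehat{\boldsymbol{w}}^{(0)}$ has small relative $\ell_{2}$ error $\|\widehat{\boldsymbol{w}}^{(0)} - \boldsymbol{w}\|/\|\boldsymbol{w}\| \lesssim \sqrt{\log n/(n p_{\mathrm{obs}} L)}$. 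In the refinement, for each item $i$ I would analyze the coordinate-wise MLE $\widetilde{w}_{i}$ obtained by maximizing the BTL log-likelihood in the $i$-th score while freezing the neighbors at their current iterates. Because item $i$ participates in $\mathrm{deg}(i)\cdot L \asymp n p_{\mathrm{obs}} L$ independent comparisons, a Bernstein-type concentration argument shows that $\widetilde{w}_{i}$ deviates from $w_{i}$ by a \emph{statistical} fluctuation of order $\sqrt{\log n/(n p_{\mathrm{obs}} L)}$ plus a \emph{propagated} term driven by the neighbors' current errors; a union bound over $i\in[n]$ supplies the $\log n$ factor. The careful comparison between the running iterate and $\widetilde{w}_{i}$ is then shown to contract the propagated term geometrically, so that after $O(\log n)$ rounds the $\ell_{\infty}$ loss is dominated by the statistical fluctuation, giving the target rate.

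The hard part is the decoupling in the refinement stage: each coordinate-wise MLE is computed from the \emph{imperfect} neighbor estimates, so a naive estimate would inject the full aggregate ($\ell_{2}$) error into every coordinate and ruin the $\ell_{\infty}$ rate. The technical crux is therefore a stability argument showing that $\widetilde{w}_{i}$ is robustly Lipschitz in the neighbor scores, so the propagated error remains a lower-order, contracting quantity while the genuinely local, independent randomness governs the leading term; a leave-one-out construction that severs the statistical dependence between $\widehat{w}_{i}$ and the edges incident to $i$ is the natural device, and this is where I expect the main difficulty to lie. Two subsidiary checks complete the argument: uniform control of the random-graph quantities (degrees and the spectral gap of the Laplacian) with probability $1-O(n^{-2})$, and the running time — each of the $O(\log n)$ outer iterations touches every edge once and solves each coordinate-wise MLE by an $O(\log n)$-step bisection, yielding the claimed $O(|\mathcal{E}|\log^{2}n)$ complexity.
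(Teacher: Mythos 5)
Your overall architecture coincides with the paper's: reduce exact top-$K$ recovery to an $\ell_{\infty}$ bound of order $\sqrt{\log n/(np_{\mathrm{obs}}L)}$, seed with Rank Centrality's $\ell_{2}$ guarantee, and contract the pointwise error via coordinate-wise MLEs over $O(\log n)$ rounds (the paper proves exactly the recursion $\|\boldsymbol{w}^{(t+1)}-\boldsymbol{w}\|_{\infty}\lesssim \|\boldsymbol{w}^{(t)}-\boldsymbol{w}\|/\|\boldsymbol{w}\|+\tfrac{\log n}{np_{\mathrm{obs}}}\|\boldsymbol{w}^{(t)}-\boldsymbol{w}\|_{\infty}$ you describe). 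Where you diverge is precisely at the point you flag as the crux: you propose a leave-one-out construction to decouple $\hat{w}_i$ from the edges incident to $i$, whereas the paper never does this. Instead it (i) splits the edges once into $\mathcal{E}^{\mathrm{init}}$ and $\mathcal{E}^{\mathrm{iter}}$ so that $\boldsymbol{w}^{(0)}$ is independent of the refinement data, and (ii) proves the deterministic domination inequality $|\hat{\ell}_i(w_i)-\hat{\ell}_i(\tau)-(\ell^{*}(w_i)-\ell^{*}(\tau))|\leq \tfrac{1}{4w_{\min}^{2}}|\tau-w_i|\sum_{j:(i,j)\in\mathcal{E}}|\hat{w}_j^{\mathrm{ub}}-w_j|$, valid uniformly over all iterates dominated entrywise by an auxiliary score $\hat{\boldsymbol{w}}^{\mathrm{ub}}$ that is built from $\boldsymbol{w}^{(0)}$ and $\boldsymbol{w}$ alone, hence independent of $\mathcal{G}$ and $\boldsymbol{y}^{\mathrm{iter}}$; Bernstein is applied only to this data-independent bound, and an induction on which entries have been replaced (the hypotheses $\mathcal{M}_t,\mathcal{B}_t,\mathcal{H}_t$) keeps the domination valid across iterations while reusing the same samples. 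Your route is plausible --- leave-one-out is exactly the device used in later work on $\ell_{\infty}$ guarantees for this model --- and it buys you freedom from the replacement-threshold bookkeeping and the $\hat{\boldsymbol{w}}^{\mathrm{ub}}$ construction, but at the cost of maintaining $n$ auxiliary leave-one-out sequences and proving their proximity to the true iterates, none of which you have carried out; the paper's trick is arguably more economical here because the perturbation enters only through the scalar $\sum_{j\in\mathcal{N}(i)}|\hat{w}_j-w_j|$, which is monotone in the entrywise errors and therefore controllable by a single independent majorant. One small correction: injecting the full normalized $\ell_{2}$ error into every coordinate does \emph{not} ruin the rate --- that term is already $O(\sqrt{\log n/(np_{\mathrm{obs}}L)})$ after initialization; the quantity that must contract is only the $\ell_{\infty}$-to-$\ell_{\infty}$ propagation, which is damped by the factor $\tfrac{\log n}{np_{\mathrm{obs}}}$ because each vertex sees only an $np_{\mathrm{obs}}$-sized neighborhood.
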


\begin{remark}We assume throughout that the input fed to each ranking
algorithm is the sufficient statistic $\left\{ y_{i,j}\mid\left(i,j\right)\in\mathcal{E}\right\} $
rather than the entire collection of $y_{i,j}^{(l)}$,
otherwise even reading all data takes at least $O\left(L\cdot\left|\mathcal{E}\right|\right)$ flops~/~time.\end{remark}

Theorem \ref{thm:achievability} characterizes an {\em identifiable
region} within which exact identification of top-$K$ items is plausible
by nearly linear-time algorithms. The algorithm we propose, as detailed
in Section~\ref{sec:achievability}, attempts recovery by computing a score estimate whose errors can be uniformly controlled across all entries. 
Afterwards, the algorithm reports
the $K$ items that receive the highest estimated scores.

Encouragingly, the above identifiable region is minimax optimal. Consider
a given separation condition $\Delta_{K}$, and suppose that nature
behaves in an adversarial manner by choosing the worst-case scores
$\boldsymbol{w}$ compatible with $\Delta_{K}$. This imposes a minimax
lower bound on the quality of comparisons necessary for reliable ranking,
as given below.

\begin{theorem}[\textbf{Minimax Lower Bounds}] \label{thm:optimality}
Fix $\epsilon\in\left(0,\frac{1}{2}\right)$, and let ${\cal G}\sim{\cal G}_{n,p_{{\rm obs}}}$.
If 
\begin{align}
L & \text{ }\leq\text{ }c\frac{\left(1-\epsilon\right)\log n-2}{np_{\mathrm{obs}}\Delta_{K}^{2}}
\end{align}
holds for some constant%
\footnote{More precisely, one can take $c=w_{\min}^{4}/(2w_{\max}^{4})$.%
} $c>0$, then for any ranking scheme $\psi$, there exists a preference
vector $\boldsymbol{w}$ with separation $\Delta_{K}$ such that $P_{\mathrm{e}}\left(\psi\right)\geq\epsilon$.
\end{theorem}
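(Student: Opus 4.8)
The plan is to reduce top-$K$ identification to a multiway hypothesis-testing problem and then invoke Fano's inequality. First I would build a family of $M$ preference vectors $\{\boldsymbol{w}^{(j)}\}$, each obeying the separation constraint $\Delta_K$ but inducing \emph{distinct} top-$K$ sets, so that any scheme $\psi$ returning the correct set necessarily identifies the hypothesis. Concretely, assume without loss of generality that $K\le n/2$ (else pass to the symmetric ``bottom-$(n-K)$'' problem), fix items $1,\dots,K-1$ at score $w_{\max}$, and for each $j\in\{K,\dots,n\}$ let $H_j$ assign score $w_{\min}+\Delta_K w_{\max}$ to item $j$ and score $w_{\min}$ to every other item in $\{K,\dots,n\}$. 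Under $H_j$ the top-$K$ set is $[K-1]\cup\{j\}$, the gap between the $K$th and $(K+1)$th scores is exactly $\Delta_K w_{\max}$, and the $M=n-K+1\ge n/2$ hypotheses have pairwise distinct top-$K$ sets. Decoding $\widehat J=j$ iff $\psi(\boldsymbol{y})=[K-1]\cup\{j\}$ shows that the top-$K$ error under $H_j$ equals the testing error $\mathbb{P}(\widehat J\ne j\mid H_j)$, so with a uniform prior the minimax error is at least the average testing error, and it suffices to lower bound the latter.

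Next I would apply Fano's inequality. Writing $J\sim\mathrm{Unif}\{K,\dots,n\}$ for the hidden index and $(\mathcal{G},\boldsymbol{y})$ for the observations, the average error obeys $\frac{1}{M}\sum_j P_{\mathrm{e},j}\ge 1-\frac{I(J;\mathcal{G},\boldsymbol{y})+\log 2}{\log M}$. Since $\mathcal{G}$ is drawn from the same law under all hypotheses and is independent of $J$, we have $I(J;\mathcal{G},\boldsymbol{y})=I(J;\boldsymbol{y}\mid\mathcal{G})$, which by the standard convexity bound is at most the average pairwise divergence $\frac{1}{M^{2}}\sum_{j,j'}\mathbb{E}_{\mathcal{G}}\big[\mathsf{KL}(P_j\,\|\,P_{j'})\big]$, where $P_j$ is the conditional law of $\boldsymbol{y}$ under $H_j$ given $\mathcal{G}$. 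The whole argument then rests on showing this divergence is $o(\log n)$ in the claimed regime.

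To control the divergence I would use that, conditional on $\mathcal{G}$, the outcomes are independent across edges, so $\mathsf{KL}(P_j\,\|\,P_{j'})$ decomposes into a sum over edges. The only edges whose win-probabilities differ between $H_j$ and $H_{j'}$ are those incident to $j$ or $j'$; each carries $L$ i.i.d.\ Bernoulli trials and contributes $L\cdot\mathsf{KL}(\mathrm{Bern}(p)\,\|\,\mathrm{Bern}(q))$. Because all scores lie in $[w_{\min},w_{\max}]$, every win-probability is bounded away from $0$ and $1$; a direct computation gives $|p-q|=\Theta(\Delta_K)$ and hence a per-edge bound $\mathsf{KL}(\mathrm{Bern}(p)\,\|\,\mathrm{Bern}(q))\le C\,\Delta_K^2$ with $C$ governed by $w_{\max}/w_{\min}$. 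Taking the expectation over $\mathcal{G}$ turns the edge count into the expected degree: each of the at most $2(n-1)$ relevant edges is present with probability $p_{\mathrm{obs}}$, so $\mathbb{E}_{\mathcal{G}}[\mathsf{KL}(P_j\,\|\,P_{j'})]\lesssim np_{\mathrm{obs}}L\,\Delta_K^2$. Substituting into Fano and using $\log M\ge\log n-\log 2$, the average error is at least $\epsilon$ whenever $L\le c\,\frac{(1-\epsilon)\log n-2}{np_{\mathrm{obs}}\Delta_K^2}$, which produces a preference vector with $P_{\mathrm{e}}(\psi)\ge\epsilon$.

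I expect the main obstacle to be pinning down the per-edge KL constant sharply enough to reproduce the stated $c=w_{\min}^4/(2w_{\max}^4)$: one must expand $\mathsf{KL}(\mathrm{Bern}(p)\,\|\,\mathrm{Bern}(q))$ and bound it uniformly over all admissible neighbor scores $w_b\in[w_{\min},w_{\max}]$, rather than settle for a loose $O(\Delta_K^2)$ estimate, since this constant is exactly what the footnote records. A secondary piece of bookkeeping is ensuring $\log M$ genuinely matches $\log n$ up to the additive slack absorbed by the ``$-2$'', which is where the reduction to $K\le n/2$ (and its dual) enters, together with passing the expectation through the random graph so that no separate degree-concentration estimate is needed. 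The remaining steps---the Fano reduction and the edgewise KL decomposition---are routine once independence across edges and the identical graph law across hypotheses are invoked.
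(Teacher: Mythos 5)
Your proposal is correct and follows essentially the same route as the paper: a family of $\Theta(n)$ hypotheses that differ only in which single item is swapped into the top-$K$ set, a (generalized) Fano bound via the average pairwise KL divergence, an edgewise decomposition in which the random graph contributes the factor $p_{\mathrm{obs}}$, and a per-edge Bernoulli KL bound of order $\Delta_K^2$ controlled by $w_{\max}/w_{\min}$. The only cosmetic difference is your explicit three-level score assignment versus the paper's two-value permutation family, and, as you anticipate, the remaining work is exactly the uniform per-edge KL constant (the paper gets it via the $\chi^2$ upper bound, yielding $c=w_{\min}^{4}/(2w_{\max}^{4})$).
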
 


Theorem \ref{thm:optimality} taken collectively with Theorem \ref{thm:achievability}
determines the scaling of the fundamental ranking boundary on $L$.
Since the sample size sharply concentrates around $n^{2}p_{\mathrm{obs}}L$
in our model, this implies that the required sample complexity for
top-$K$ ranking scales inversely with the preference separation at
a quadratic rate. Put another way, Theorem \ref{thm:optimality}
justifies the need for a minimum separation criterion that applies to any ranking
scheme: 
\begin{equation}
\Delta_{K}\text{ }\gtrsim\text{ }\sqrt{ \frac{\log n}{np_{{\rm obs}}L}}.\label{eq:MinimumSeparation}
\end{equation}
Somewhat unexpectedly, there is no computational barrier away from this
statistical limit (at least in an order-wise sense). Several other remarks of Theorem \ref{thm:achievability} and Theorem \ref{thm:optimality}
are in order. 
\begin{itemize}[leftmargin=*, listparindent =1em, itemsep=0.1em]
\item \textbf{$\ell_{2}$ Loss vs. $\ell_{\infty}$ Loss}. A dominant fraction
of prior methods focus on the mean squared error in estimating the latent
scores $\boldsymbol{w}$. It was established by \citep{Negahban2012}
that the minimax $\ell_2$ regret is squeezed between 
\[
\frac{1}{\sqrt{np_{{\rm obs}}L}}\text{ }\lesssim\text{ }\inf_{\hat{\boldsymbol{w}}}\sup_{{\boldsymbol{w}}}\frac{\mathbb{E}\left[\|\hat{\boldsymbol{w}}-\boldsymbol{w}\|\right]}{\|\boldsymbol{w}\|}\text{ }\lesssim\text{ }\sqrt{\frac{\log n}{np_{{\rm obs}}L}},
\]
where the infimum is taken over all score estimators $\hat{\boldsymbol{w}}$.
This limit is almost identical to the minimax separation criterion (\ref{eq:MinimumSeparation})
we derive for top-$K$ identification, except for a potential logarithmic factor. In fact, if the pointwise error of  $\hat{\boldsymbol{w}}$ is {\em uniformly} bounded by $\sqrt{\log n/(np_{\mathrm{obs}}L)}$, then $\hat{\boldsymbol{w}}$ necessarily achieves the minimax $\ell_2$ error. Moreover, the pointwise error bound presents a fundamental bottleneck for top-$K$ ranking --- it will be impossible to differentiate the $K^{\text{th}}$ and $(K+1)^{\text{th}}$ ranked items unless their score separation exceeds the aggregate error of the corresponding score estimates for these two items.  Based on this observation, our algorithm is mainly designed to
control the elementwise estimation error. As will be seen in Section \ref{sec:achievability}, the resulting
estimation error will be uniformly spread over all entries, which
is optimal in both $\ell_{2}$ and $\ell_{\infty}$ sense. 

\item \textbf{From Coarse to Detailed Ranking}. The identifiable
region we present depends only on the preference separation between
items $K$ and $K+1$. This arises since we only intend to  identify the
group of top-$K$ items without specifying the fine details within
this group---we term it ``coarse top-$K$ ranking''. In fact, our results readily uncover the minimax separation
requirements for the case where one further expects fine ordering
among these $K$ items. Specifically, this task is feasible---in the
minimax sense---if and only if
\begin{equation}
\Delta_{i}\text{ }\gtrsim\text{ }\sqrt{ \frac{\log n} {{np_{\mathrm{obs}}L}}},\qquad1\leq i\leq K.
\label{eq:detailed-ranking}
\end{equation}
In words, the feasibility of detailed top-$K$ ranking relies on sufficient score separation between any consecutive pair of the top-$K$ ranked items.

\item \textbf{High SNR Requirement for Total Ordering}. In many situations,
the separation criterion (\ref{eq:detailed-ranking}) immediately suggests the hardness (or
even impossibility) of recovering the ordering over all
items. In fact, to figure out the total order, one expects sufficient
score separation between all pairs of consecutive items, namely,
\[
\Delta_{i}\text{ }\gtrsim\text{ }\sqrt{ \frac{\log n} {{np_{\mathrm{obs}}L}} },\qquad  1\leq i < n.
\]
Since the $\Delta_{i}$'s are defined in a normalized way (\ref{eq:separation}),
they necessarily satisfy 
\[
\sum_{i=1}^{n-1}\Delta_{i}=\frac{w_{1}-w_{n}}{w_{\max}}\leq1.
\]
As can be easily verified, the preceding two conditions would be incompatible
unless 
\begin{align*}
L & \text{ }\gtrsim\text{ } \frac{n\log n} {p_{\mathrm{obs}}},
\end{align*}
which imposes a fairly stringent SNR requirement. For instance, under
a sparse graph where $p_{\mathrm{obs}}\asymp\frac{\log n}{n}$, the
number of repeated comparisons (and hence the SNR) needs to be at
least on the order of $n^2$, regardless of the method employed. Such a
high SNR requirement could be increasingly more difficult to guarantee
as $n$ grows. 
\item \textbf{Passive Ranking vs.~Active Ranking}. In our passive ranking
model, the sample complexity requirement $n^{2}p_{\mathrm{obs}}L$
for reliable top-$K$ identification is given by 
\[
n^{2}p_{\mathrm{obs}}L ~\gtrsim~ \frac{n\log n}{\Delta_{K}^{2}}.
\]
In comparison, when adaptive sampling is employed for the preference-based
model, the most recent upper bound on the sample complexity (e.g.~Theorem 1 of
\citep{Hullermeier2013topKranking}) is on the order
of 
\[
\sum\nolimits_{i=1}^{n-1} \frac{1}{ \Delta_{i}^{2}}\log n.
\]
In the challenging regime where a dominant fraction of consecutive pairs are minimally
separated (e.g.~$\Delta_{1}=\cdots=\Delta_{n-1}$), the above results seem to suggest that active ranking may not outperform passive ranking, since the sample complexity reads $(n\log n) / \Delta_1^2$. For the other
extreme case where only a single pair is minimally separated (e.g.~$\Delta_{1}\ll\Delta_{i}$
($i\geq2$)), active ranking is more appealing,  because it will adaptively
acquire more paired evaluation over the minimally separated items instead of wasting samples on those pairs that are easy to differentiate. 
\end{itemize}

\section{Ranking Scheme: Spectral Method Meets MLE\label{sec:achievability}}

This section presents a nearly linear-time algorithm that attempts
recovery of the top-$K$ ranked items. The algorithm proceeds in two
stages: (1) an appropriate initialization that concentrates around
the ground truth in an $\ell_{2}$ sense, which can be obtained via
a spectral ranking method; (2) a sequence of iterative updates sharpening
the estimates in a pointwise manner, which consists in computing coordinate-wise
MLE solutions. The two stages operate upon
different sets of samples, while {\em no further sample splitting} is needed
within each stage. The combination of these two stages will be referred
to as \emph{Spectral MLE}.

Before continuing to describe the details of our algorithm, we introduce a few notations
that will be used throughout. 
\begin{itemize}
\item $\mathcal{L}\left(\boldsymbol{w};\boldsymbol{y}_{i}\right)$: the
likelihood function of a latent preference vector $\boldsymbol{w}$,
given the part of comparisons $\boldsymbol{y}_{i}$ that have bearing
on item $i$. 

\item $\boldsymbol{w}_{\backslash i}$: for any preference vector $\boldsymbol{w}$,
let $\boldsymbol{w}_{\backslash i}$ represent $\left[w_{1},\cdots,w_{i-1},w_{i+1},\cdots,w_{n}\right]$
excluding $w_{i}$. 

\item $\mathcal{L}\left(\tau,\boldsymbol{w}_{\backslash i};\boldsymbol{y}_{i}\right)$:
with a slight abuse of notation, denote by $\mathcal{L}\left(\tau,\boldsymbol{w}_{\backslash i};\boldsymbol{y}_{i}\right)$
the likelihood of the preference vector $\left[w_{1},\cdots,w_{i-1},\tau,w_{i+1},\cdots,w_{n}\right]$. 
\end{itemize}

\subsection{Algorithm: Spectral MLE}

It has been established that the spectral ranking method, particularly
\emph{Rank Centrality},
is able to discover a preference vector $\hat{\boldsymbol{w}}$ that
incurs minimal $\ell_{2}$ loss.
To enable reliable ranking, however, it is more desirable to obtain
an estimate that is faithful in an elementwise sense. Fortunately,
the solution returned by the spectral method will serve as an ideal
initial guess to seed our algorithm. The two components of the proposed
{Spectral MLE} are described below. 

\newsavebox\MLE 
\begin{lrbox}{\MLE}   
\begin{minipage}{\textwidth}    

\begin{equation}
w_{i}^{\mathrm{mle}}\leftarrow\arg\max_{\tau\in\left[w_{\min},w_{\max}\right]}\mathcal{L}\left(\tau,\boldsymbol{w}_{\backslash i}^{(t)};\text{ }\boldsymbol{y}_{i}^{\mathrm{iter}}\right).\label{eq:MLEiteration}
\end{equation}
\end{minipage} 
\end{lrbox}

\newsavebox\CheckGap 
\begin{lrbox}{\CheckGap}   
\begin{minipage}{\textwidth}    

\begin{equation}
w_{i}^{(t+1)}\leftarrow\begin{cases}
w_{i}^{\mathrm{mle}},\quad & \text{if }\left|w_{i}^{\mathrm{mle}}-w_{i}^{(t)}\right|>\xi_{t},\\
w_{i}^{(t)}, & \text{else}.
\end{cases}\label{eq:RefinementIteration}
\end{equation}
\end{minipage} 
\end{lrbox}

\begin{algorithm*}[t]
\caption{Spectral MLE.\label{alg:Nonconvex}}
\label{Algorithm:SpectralMLE}%
\begin{tabular}{>{\raggedright}p{1\textwidth}}
\textbf{Input}: The average comparison outcome $y_{i,j}$ for all
$(i,j)\in\mathcal{E}$; the score range $\left[w_{\min},w_{\max}\right]$.\vspace{0.7em}\tabularnewline
\textbf{Partition $\mathcal{E}$} randomly into two sets $\mathcal{E}^{\mathrm{init}}$
and $\mathcal{E}^{\mathrm{iter}}$ each containing $\frac{1}{2}\left|\mathcal{E}\right|$
edges. Denote by $\boldsymbol{y}_{i}^{\mathrm{init}}$ (resp. $\boldsymbol{y}_{i}^{\mathrm{iter}}$)
the components of $\boldsymbol{y}_{i}$ obtained over $\mathcal{E}^{\mathrm{init}}$
(resp. $\mathcal{E}^{\mathrm{iter}}$). \vspace{0.7em}\tabularnewline
\textbf{Initialize} $\boldsymbol{w}^{(0)}$ to be the estimate computed
by \emph{Rank Centrality} on $\boldsymbol{y}_{i}^{\mathrm{init}}$
($1\leq i\leq n$).\vspace{0.7em}\tabularnewline
\textbf{Successive Refinement: for $t=0:T$ do}\tabularnewline
\multirow{3}{1\textwidth}{$\quad$1) Compute the coordinate-wise MLE \usebox{\MLE} }\tabularnewline
\tabularnewline
\tabularnewline
$\quad$2) For each $1\leq i\leq n$, set \usebox{\CheckGap} \vspace{0.7em}\tabularnewline
\textbf{Output }the indices of the \textbf{$K$} largest components
of\textbf{ }$\boldsymbol{w}^{(T)}$.\tabularnewline
\end{tabular}
\end{algorithm*}

\begin{algorithm}
\caption{Rank Centrality \citep{Negahban2012}}
\label{Algorithm:RC}
\begin{tabular}{l}
\textbf{Input}: The average comparison outcome $y_{i,j}$ for all
$(i,j)\in\mathcal{E}^{\mathrm{iter}}$.\vspace{0.7em}\tabularnewline
\textbf{Compute} the transition matrix $\boldsymbol{P}=[P_{i,j}]_{1\leq i,j\leq n}$ such that\tabularnewline
$\qquad\qquad\qquad P_{i,j}=\begin{cases}
\frac{1}{d_{\max}} \frac{y_{j,i}}{y_{i,j}+y_{j,i}},\quad & \text{if }(i,j)\in\mathcal{E}^{\mathrm{iter}};\\
0, & \text{if }i\neq j\text{ and }(i,j)\notin\mathcal{E}^{\mathrm{iter}};\\
1-\frac{1}{d_{\max}}\sum_{k:(i,k)\in\mathcal{E}^{\mathrm{iter}}} \frac{y_{k,i}}{y_{i,k}+y_{k,i}},\quad & \text{if }i=j.
\end{cases}$ \vspace{0.3em}\tabularnewline
where $d_{\max}$ is the maximum out-degrees of vertices in $\mathcal{E}^{\mathrm{iter}}$. \vspace{0.7em}\tabularnewline 
\textbf{Output} the stationary distribution of
$\boldsymbol{P}$\tabularnewline
\end{tabular}
\end{algorithm}

\begin{enumerate}[leftmargin=*, listparindent =1em, itemsep=0.1em]
\item \textbf{Initialization via Spectral Ranking}. We generate an initialization
$\boldsymbol{w}^{(0)}$ via { Rank Centrality}. In words, Rank Centrality proceeds by constructing a Markov chain based on the
pairwise observations, and then returning its stationary distribution
by computing the leading eigenvector of the associated probability
transition matrix. For the sake of completeness, we provide the detailed procedure of Rank Centrality in Algorithm \ref{Algorithm:RC}. Under the Erd\H{o}s\textendash{}R\'enyi model,
the estimate $\boldsymbol{w}^{(0)}$ is known to be reasonably faithful in terms
of the mean squared loss \citep{Negahban2012}, that is, with high probability,
\begin{align*}
\frac{\|\boldsymbol{w}^{(0)}-\boldsymbol{w}\| }{ \left\Vert \boldsymbol{w}\right\Vert }  ~\lesssim~ \sqrt{ \frac{\log n} {np_{\mathrm{obs}}L}}.
\end{align*}

\item \textbf{Successive Refinement via Coordinate-wise MLE.} Note that
the state-of-the-art finite-sample analyses for MLE (e.g. \citep{Negahban2012}) involve
only the $\ell_{2}$ accuracy of the global MLE when the locations of all samples are i.i.d. (rather than the graph-based model considered herein). Instead of seeking
a global MLE solution, we propose to carefully utilize the coordinate-wise MLE. Specifically, we cyclically iterate through each component,
one at a time, maximizing the log-likelihood function with respect
to that component. In contrast to the coordinate-descent method
for solving the global MLE, we replace the preceding estimate with
the new coordinate-wise MLE only when  they are far apart.
Theorem \ref{thm:refinement}
(to be stated in Section 4.2) guarantees the contraction of the
pointwise error for each cycle, which leads to a geometric convergence
rate. 
\end{enumerate}
The algorithm then returns the indices of top-$K$ items in accordance
to the score estimate. A formal and detailed description of the
procedure is summarized in Algorithm \ref{Algorithm:SpectralMLE}.

\begin{remark} We split $\mathcal{E}$ into $\mathcal{E}^{\mathrm{init}}$ and $\mathcal{E}^{\mathrm{iter}}$ for analytical convenience. Empirically, if we keep $\mathcal{E}^{\mathrm{init}}=\mathcal{E}^{\mathrm{iter}}=\mathcal{E}$ and reuse all samples, then it seems to slightly outperform the sample-splitting procedure. Thus, we recommend the sample-reusing procedure for practical use, and leave the theoretical justification for future work. 
\end{remark}

\begin{remark}Spectral MLE is inspired by recent advances in solving
non-convex programs by means of iterative methods \citep{KMO:it,keshavan2009matrix,jain2013low, candes2014phase,netrapalli2013phase,balakrishnan2014statistical}.
A key message conveyed from these works is:
once we arrive at an appropriate initialization (often via a spectral
method), the iterative estimates will  be rapidly attracted towards
the global optimum.\end{remark}

\begin{remark}While our analysis is restricted to the Erd\H{o}s\textendash{}R\'enyi model, Spectral MLE is applicable to general graphs. We caution,
however, that spectral ranking is not guaranteed to achieve
minimal $\ell_{2}$ loss for general graphs and, in particular,  the kind of graphs exhibiting small
spectral gaps. Therefore, Spectral MLE is not necessarily minimax optimal under
general graph patterns. \end{remark}

Notably, the successive refinement stage is developed based on the observation that we are able to characterize the confidence intervals of the coordinate-wise MLEs at each iteration. The role of such confidence intervals is to help detect outlier components that incur large
pointwise loss. Since the initial guess is optimal in an overall $\ell_2$ sense, a large fraction of its entries are already faithful relative to the ground truth.  As a consequence, it suffices to disentangle the ``sparse'' set of  outliers.

One appealing feature of Spectral MLE is its low computational complexity.
Recall that the initialization step by Rank Centrality can be solved
for $\epsilon$ accuracy---i.e. identifying an estimate $\hat{\boldsymbol{w}}$ such that $\|\hat{\boldsymbol{w}}-\boldsymbol{w}\|/\|\boldsymbol{w}\|\leq \epsilon$---within $O\left(\left|\mathcal{E}\right|\log\frac{1}{\epsilon}\right)$
time instances by means of a power method. In addition, for each component $i$, the coordinate-wise likelihood function
involves a sum of $\mathrm{deg}\left(i\right)$ terms. Since finding
the coordinate-wise MLE (\ref{eq:MLEiteration}) can be cast as a
one-dimensional convex program, one can get $\epsilon$ accuracy via
a bisection method within $O\left(\mathrm{deg}\left(i\right)\cdot\log\frac{1}{\epsilon}\right)$
time. Therefore, each iteration cycle of the successive
refinement stage can be accomplished in time $O\left(\left|\mathcal{E}\right|\cdot\log\frac{1}{\epsilon}\right)$.

The following theorem establishes the ranking accuracy of Spectral
MLE under the BTL model.

\begin{theorem}\label{thm:SpectralMLE}Let $c_{0},c_1, c_2, c_{3}>0$
be some universal constants. Suppose that $L=O(\mathrm{poly}(n))$, the comparison
graph $\mathcal{G}\sim\mathcal{G}_{n,p_{\mathrm{obs}}}$ with $p_{\mathrm{obs}}>c_{0}\log n/n$, 
and  the separation measure (\ref{eq:separation}) satisfies 
\begin{equation}
\Delta_{K}\text{ }> \text{ }c_{1}\sqrt{ \frac{\log n}  {{np_{\mathrm{obs}}L}}}.\label{eq:OptimalGap}
\end{equation}
Then with probability exceeding $1-1/n^{2}$, Spectral MLE perfectly
identifies the set of  top-$K$ ranked items, provided that the algorithmic
parameters obey 
%
%
$T\geq c_{2}\log n$ and
\begin{equation}
\xi_{t}:=c_{3}\left\{ \xi_{\min}+\frac{1}{2^{t}}\left(\xi_{\max}-\xi_{\min}\right)\right\},
\label{eq:Delta_t_parameter}
\end{equation}
where $\xi_{\min}:=\sqrt{\frac{\log n}{np_{\mathrm{obs}}L}}$
and $\xi_{\max}:=\sqrt{\frac{\log n}{p_{\mathrm{obs}}L}}$.
\end{theorem}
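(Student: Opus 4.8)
The plan is to reduce the ranking guarantee to a uniform (that is, $\ell_\infty$) control of the score estimate produced by the successive-refinement stage. The key structural observation is the one highlighted in the paper: once we can guarantee that the final iterate $\boldsymbol{w}^{(T)}$ satisfies
\begin{equation*}
\|\boldsymbol{w}^{(T)}-\boldsymbol{w}\|_{\infty} \;\lesssim\; w_{\max}\sqrt{\frac{\log n}{np_{\mathrm{obs}}L}},
\end{equation*}
then any item $i\le K$ and any item $j\ge K+1$ have estimated scores separated by at least $w_K-w_{K+1}$ minus twice this pointwise error. Under the separation hypothesis (\ref{eq:OptimalGap}), the gap $w_K-w_{K+1}=\Delta_K w_{\max}$ dominates $2\|\boldsymbol{w}^{(T)}-\boldsymbol{w}\|_{\infty}$ (after absorbing the constant $c_1$), so the $K$ largest components of $\boldsymbol{w}^{(T)}$ are exactly the top-$K$ items, forcing the output of Algorithm~\ref{Algorithm:SpectralMLE} to coincide with $[K]$. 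Thus the entire theorem hinges on establishing the displayed $\ell_\infty$ bound with the claimed failure probability $1/n^2$.

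To obtain that $\ell_\infty$ bound I would set up a contraction argument over the refinement cycles, which is exactly what Theorem~\ref{thm:refinement} (promised in Section~4.2) is designed to supply. First I would invoke the Rank Centrality guarantee on $\mathcal{E}^{\mathrm{init}}$ to control the initial $\ell_2$ error, $\|\boldsymbol{w}^{(0)}-\boldsymbol{w}\|/\|\boldsymbol{w}\|\lesssim\sqrt{\log n/(np_{\mathrm{obs}}L)}$, which by the fixed dynamic range (\ref{eq:DynamicRange}) translates into a crude uniform bound of order $\xi_{\max}=\sqrt{\log n/(p_{\mathrm{obs}}L)}$ on the worst entry and, simultaneously, shows that only an $O(1/\log n)$ fraction of coordinates can exceed the target level $\xi_{\min}=\sqrt{\log n/(np_{\mathrm{obs}}L)}$ (this is the ``sparse set of outliers'' remark). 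Next, for a fixed iterate $\boldsymbol{w}^{(t)}$ I would characterize the coordinate-wise MLE (\ref{eq:MLEiteration}): writing down the stationarity condition for the one-dimensional concave log-likelihood in $\tau$, I would express $w_i^{\mathrm{mle}}$ implicitly and show that its deviation from $w_i$ splits into a fluctuation term (from the randomness of $\boldsymbol{y}_i^{\mathrm{iter}}$, controlled by a Bernstein / Hoeffding bound over the $\mathrm{deg}(i)\asymp np_{\mathrm{obs}}$ comparisons, each averaged over $L$ repetitions) and a propagated-error term depending on $\|\boldsymbol{w}_{\backslash i}^{(t)}-\boldsymbol{w}_{\backslash i}\|$. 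The thresholding rule (\ref{eq:RefinementIteration}) with the schedule (\ref{eq:Delta_t_parameter}) is then used to accept an update only when the MLE genuinely improves the estimate, which halves the effective error radius $\xi_t$ at each cycle while never letting an already-accurate coordinate be corrupted by a noisy MLE. Iterating the contraction $T\gtrsim\log n$ times drives the radius from $\xi_{\max}$ down to the floor $\xi_{\min}$, yielding the desired $\ell_\infty$ bound.

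The hard part will be the analysis of a single refinement cycle, specifically quantifying the fluctuation of the coordinate-wise MLE and showing that the confidence interval implicit in the threshold $\xi_t$ simultaneously (i) corrects every outlier coordinate whose current error exceeds $\xi_t$, and (ii) leaves untouched every coordinate that is already within $\xi_t$ of the truth, all uniformly over $i\in[n]$ and over all $T\asymp\log n$ iterations. The subtlety is that the MLE for coordinate $i$ depends on the \emph{entire} current iterate through $\boldsymbol{w}_{\backslash i}^{(t)}$, so the errors are coupled across coordinates and the naive union bound must be handled with care; the sample-splitting between $\mathcal{E}^{\mathrm{init}}$ and $\mathcal{E}^{\mathrm{iter}}$ helps decouple the initialization randomness from the refinement randomness, but one still needs a uniform concentration statement over all candidate iterates, which is where I expect the bulk of the technical work (and the deferral to Appendix~\ref{sec:Proof-SpectralMLE}) to lie. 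I would isolate this into the standalone contraction lemma (Theorem~\ref{thm:refinement}) and then feed its conclusion into the reduction of the first paragraph, taking the constants $c_2,c_3$ large enough to absorb the universal constants produced by the concentration bounds and to guarantee that $T\asymp\log n$ cycles suffice for the geometric decay to reach $\xi_{\min}$.
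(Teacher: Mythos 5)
Your plan is essentially the paper's proof: reduce correct top-$K$ identification to the bound $\|\boldsymbol{w}^{(T)}-\boldsymbol{w}\|_{\infty}\lesssim\sqrt{\log n/(np_{\mathrm{obs}}L)}$ via the triangle inequality against the separation $w_K-w_{K+1}=\Delta_K w_{\max}$, seed with the Rank Centrality $\ell_2$ guarantee, and run an induction over cycles in which Theorem~\ref{thm:refinement} supplies the per-cycle MLE accuracy and the threshold schedule $\xi_t$ both corrects outliers and protects already-accurate coordinates. The paper formalizes exactly this with the hypotheses $\mathcal{M}_t$ (MLE accuracy $\tfrac12\xi_t$), $\mathcal{B}_t$ (all coordinates with error $\ge 1.5\xi_t$ replaced) and $\mathcal{H}_t$ (no coordinate with error $\le\tfrac12\xi_t$ replaced), split into the phases $t\le T_0$ and $t>T_0$.

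The one idea you name as ``the hard part'' but do not resolve is precisely the step the paper cannot skip: since $\boldsymbol{y}^{\mathrm{iter}}$ is reused at every cycle, $\boldsymbol{w}^{(t)}$ is correlated with the data, and a ``uniform concentration statement over all candidate iterates'' in the generality you invoke it would cost too much (the set of possible iterates is data-dependent and exponentially rich). The paper's device is to state Theorem~\ref{thm:refinement} uniformly only over those $\hat{\boldsymbol{w}}$ that are coordinate-wise dominated by a \emph{fixed, graph-independent} vector $\hat{\boldsymbol{w}}^{\mathrm{ub}}$, and then, in the proof of Theorem~\ref{thm:SpectralMLE}, to exhibit such a dominating vector at each cycle: $\hat{w}_i^{\mathrm{ub}}=w_i+1.5\xi_{t-1}$ on the coordinates where $|w_i^{(0)}-w_i|>\tfrac12\xi_{t-1}$ and $\hat{w}_i^{\mathrm{ub}}=w_i^{(0)}$ elsewhere. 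This is built only from $\boldsymbol{w}^{(0)}$ (independent of $\mathcal{E}^{\mathrm{iter}}$) and the ground truth, and the induction hypotheses $\mathcal{B}_{t-1},\mathcal{H}_{t-1}$ are exactly what certify that it dominates $\boldsymbol{w}^{(t)}$ and still has $\ell_2$ error $O(\delta\|\boldsymbol{w}\|)$ (the replacement inflates each outlier's error by at most a factor $1.5\xi_{t-1}/0.5\xi_{t-1}=3$, and for $t>T_0$ a counting argument bounds the number of outliers by $4\delta^2\|\boldsymbol{w}\|^2/\xi_\infty^2$). Without this construction the contraction argument does not close under sample reuse; with it, your outline goes through as written. (A minor quibble: your claim that only an $O(1/\log n)$ fraction of coordinates exceed $\xi_{\min}$ does not follow from the $\ell_2$ bound --- Markov at level $c\,\xi_{\min}$ only gives a constant fraction of $n$ --- but nothing in your argument actually uses that rate.)
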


Theorem \ref{thm:SpectralMLE} basically implies that the proposed
algorithm succeeds in separating out the high-ranking objects with
probability approaching one, as long as the preference score satisfies the separation
condition 
\[
\Delta_{K}\text{ }\gtrsim\text{ }\sqrt{\frac{\log n}{np_{\mathrm{obs}}L}}.
\]
Additionally, Theorem \ref{thm:SpectralMLE} asserts that the
number of iteration cycles required in the second stage scales at
most logarithmically, revealing that Spectral MLE achieves the desired
ranking precision with nearly linear-time computational complexity.

\subsection{Successive Refinement: Convergence and Contraction of $\ell_{\infty}$
Error\label{sub:Contractivity}}

In the sequel, we would like to provide some interpretation
as to why we expect the
pointwise error of the score estimates to be controllable. The argument is heuristic in nature, since we will assume
for simplicity that each iteration employs a fresh set of samples
$\boldsymbol{y}$ independent of the present estimate $\boldsymbol{w}^{(t)}$.

Denote by $\ell^{*}\left(\tau\right)$ the true log-likelihood function
\begin{equation}
\ell^{*}\left(\tau\right):=\frac{1}{L}\log\mathcal{L}\left(\tau,\boldsymbol{w}_{\backslash i};\boldsymbol{y}_{i}\right).\label{eq:true-ll}
\end{equation}
Straightforward calculation  suggests that its expectation around
$w_{i}$ can be controlled through a locally strongly-concave function, due to the existence of a second-order lower bound 
\begin{align}
\mathbb{E}_{\boldsymbol{w}}\left[\mathcal{\ell}^{*}\left(w_{i}\right)-\mathcal{\ell}^{*}\left(\tau\right)\right] 
 & ~=~ \sum_{j:(i,j)\in \mathcal{E}} \mathsf{KL}\left( \frac{w_i}{w_i+w_j} ~\Big\Vert~ \frac{\tau}{\tau+w_j}  \right) \nonumber \\
 & ~\gtrsim~ \left|\tau-w_{i}\right|^{2}np_{\mathrm{obs}},
\label{eq:penalty}
\end{align}
where $\mathsf{KL}(p~\Vert~q)$ represents the Kullback–Leibler (KL) divergence between $\mathsf{Bernoulli}(p)$ and $\mathsf{Bernoulli}(q)$.  These calculations will be made precise in Appendix \ref{sub:achievability-proof} (and in particular Eqn. (\ref{eq:LB_KL_E}) and (\ref{eq:PinskerLB})).

This measures the penalty when $\tau$ deviates from the ground truth.
Note, however, that we don't have direct access to  $\ell^{*}\left(\cdot\right)$
since it relies on the latent scores $\boldsymbol{w}$. To obtain a computable surrogate, we replace $\boldsymbol{w}$ with the present estimate
$\boldsymbol{w}^{(t)}$, resulting in the plug-in likelihood function
\[
\hat{\ell}_{i}\left(\tau\right):=\frac{1}{L}\log\mathcal{L}\left(\tau,\boldsymbol{w}_{\backslash i}^{(t)};\boldsymbol{y}_{i}\right).
\]
Fortunately, the surrogate loss incurred by employing $\hat{\ell}_{i}\left(\tau\right)$
is locally stable in the sense that, 
\begin{align}
 \left|\mathbb{E}_{\boldsymbol{w}}\left[\hat{\ell}_{i}\left(\tau\right)-\hat{\ell}_{i}\left(w_{i}\right)-\left(\mathcal{\ell}^{*}\left(\tau\right)-\mathcal{\ell}^{*}\left(w_{i}\right)\right)\right]\right| 
~ \lesssim ~
 np_{\mathrm{obs}}\left|\tau-w_{i}\right|\frac{\left\Vert \hat{\boldsymbol{w}}-\boldsymbol{w}\right\Vert }{\left\Vert \boldsymbol{w}\right\Vert },
\label{eq:estimation error}
\end{align}
which will be made clear in Appendix \ref{sub:achievability-proof}. This essentially means that the surrogate loss $\hat{\ell}_{i}\left(\tau\right)-\hat{\ell}_{i}\left(w_{i}\right)$ is a reasonably good approximation of the true loss $\mathcal{\ell}^{*}\left(\tau\right)-\mathcal{\ell}^{*}\left(w_{i}\right)$, as long as $\tau$ (resp.~$\hat{\boldsymbol{w}}$) is sufficiently close to $w_i$ (resp.~$\boldsymbol{w}$). 
As a result, any candidate $\tau\neq w_{i}$ will be viewed as \emph{less likely} than
and hence distinguishable from the ground truth $w_{i}$ (i.e. $\hat{\ell}(w_i)>\hat{\ell}(\tau)$)  in the mean sense, provided
that its deviation penalty (\ref{eq:penalty}) dominates the surrogate
loss (\ref{eq:estimation error}). This would hold as long as the pointwise loss exceeds the normalized $\ell_2$ loss:
\[
\left|\tau-w_{i}\right| ~\gtrsim~ \frac{\left\Vert \hat{\boldsymbol{w}}-\boldsymbol{w}\right\Vert}{\left\Vert \boldsymbol{w}\right\Vert }.
\]
Thus,  our procedure is expected to be able to converge to a solution
whose pointwise error is as low as the normalized $\ell_{2}$ error
of the initial guess.

Encouragingly, the $\ell_{\infty}$ estimation error not only converges,
but converges at a geometric rate as well. This rapid convergence
property does not rely on the ``fresh-sample'' assumption imposed
in the above heuristic argument, as formally stated in the following
theorem.

\begin{theorem}\label{thm:refinement}Suppose that $\mathcal{G}\sim\mathcal{G}_{n,p_{\mathrm{obs}}}$
with $p_{\mathrm{obs}}>c_{0}\log n/n$ for some large constant $c_{0}$,
and there exists a score vector $\hat{\boldsymbol{w}}^{\mathrm{ub}}\in\left[w_{\min},w_{\max}\right]^{n}$ independent of $\mathcal{G}$
satisfying 
\begin{align}
 \left|\hat{w}_{i}^{\mathrm{ub}}-w_{i}\right|\hspace{0.3em} & \leq\hspace{0.3em}\xi w_{\max},\quad 1\leq i \leq n;\label{eq:weight-boundedness}\\
\|\hat{\boldsymbol{w}}^{\mathrm{ub}}-\boldsymbol{w}\| & \leq\hspace{0.3em}\delta\left\Vert \boldsymbol{w}\right\Vert .\label{eq:goodness-of-estimate}
\end{align}
 Then with probability at least $1- {c_{1}}{n^{-4}}$ for some constant
$c_{1}>0$, 
the coordinate-wise MLE 
\begin{equation}
w_{i}^{\mathrm{mle}}:=\arg\max_{\tau\in\left[w_{\min},w_{\max}\right]}\mathcal{L}\left(\tau,\hat{\boldsymbol{w}}_{\backslash i};\boldsymbol{y}_{i}\right)\label{eq:MLE_defn}
\end{equation}
satisfies 
\begin{align}
\left|w_{i}-w_{i}^{\mathrm{mle}}\right|  &<\frac{20\left(6+ \frac{\log L}{\log n}\right)w_{\max}^{5}}{w_{\min}^{4}}    \cdot \max \left\{ \delta+\frac{\log n}{np_{\mathrm{obs}}}\cdot\xi,\text{ } \sqrt{\frac{\log n}{np_{\mathrm{obs}}L}}\right\} 
\label{eq:MLEgap}
\end{align}
simultaneously for all scores $\hat{\boldsymbol{w}} \in [w_{\min}, w_{\max}]^n$ obeying 
$\left|\hat{w}_{i}-w_{i}\right|\leq\left|\hat{w}_{i}^{\mathrm{ub}}-w_{i}\right|$, $1\leq i\leq n$.
\end{theorem}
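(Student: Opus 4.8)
The plan is to set up a \emph{basic inequality} for the coordinate-wise MLE and then balance a deterministic quadratic penalty against two first-order perturbation terms. Writing $\hat{\ell}_i(\tau):=\frac{1}{L}\log\mathcal{L}(\tau,\hat{\boldsymbol{w}}_{\backslash i};\boldsymbol{y}_i)$ and $\ell^*(\tau)$ as in (\ref{eq:true-ll}), optimality of $w_i^{\mathrm{mle}}$ gives $\hat{\ell}_i(w_i^{\mathrm{mle}})\geq\hat{\ell}_i(w_i)$, so it suffices to show $\hat{\ell}_i(\tau)-\hat{\ell}_i(w_i)<0$ once $|\tau-w_i|$ exceeds the right-hand side of (\ref{eq:MLEgap}). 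A convenient first observation is that the increment splits as $\log\frac{\tau}{w_i}\sum_{j:(i,j)\in\mathcal{E}}y_{i,j}+\sum_{j:(i,j)\in\mathcal{E}}\log\frac{w_i+\hat{w}_j}{\tau+\hat{w}_j}$, so its \emph{only} stochastic part is the scalar $S:=\sum_{j:(i,j)\in\mathcal{E}}(y_{i,j}-\mathbb{E}_{\boldsymbol{w}}[y_{i,j}])$ times the explicit factor $\log\frac{\tau}{w_i}$. This renders the fluctuation affine in the data and independent of $\hat{\boldsymbol{w}}$, which is exactly what lets a single concentration event serve \emph{uniformly} over $\tau$ and over all dominated $\hat{\boldsymbol{w}}$, bypassing any net over the latter.

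Three quantitative ingredients then suffice. First, the deterministic penalty: as in (\ref{eq:penalty}), $\mathbb{E}_{\boldsymbol{w}}[\ell^*(w_i)-\ell^*(\tau)]=\sum_{j:(i,j)\in\mathcal{E}}\mathsf{KL}(\frac{w_i}{w_i+w_j}\,\Vert\,\frac{\tau}{\tau+w_j})$, which by Pinsker's inequality and the bounded range (\ref{eq:DynamicRange}) is $\gtrsim|\tau-w_i|^2\,\mathrm{deg}(i)$; a Chernoff bound with a union over vertices (valid since $p_{\mathrm{obs}}>c_0\log n/n$) gives $\mathrm{deg}(i)\asymp np_{\mathrm{obs}}$ for all $i$. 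Second, the plug-in bias: subtracting the true from the plug-in expected increments leaves $\sum_{j:(i,j)\in\mathcal{E}}\big[\log\frac{w_i+\hat{w}_j}{w_i+w_j}-\log\frac{\tau+\hat{w}_j}{\tau+w_j}\big]$, which a mean-value bound controls by $\lesssim|\tau-w_i|\,w_{\min}^{-2}\sum_{j:(i,j)\in\mathcal{E}}|\hat{w}_j-w_j|$. Third, the fluctuation: since $\mathrm{Var}(y_{i,j})\lesssim 1/L$ and $S$ is a sum of $\mathrm{deg}(i)$ bounded zero-mean terms, Bernstein plus a union over $i$ yields $|S|\lesssim\sqrt{np_{\mathrm{obs}}\log n/L}$, while $|\log\frac{\tau}{w_i}|\lesssim|\tau-w_i|/w_{\min}$.

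The remaining—and sharpest—step is to bound the random neighbor-sum $\sum_{j:(i,j)\in\mathcal{E}}|\hat{w}_j-w_j|$. A naive Cauchy--Schwarz against the $\ell_2$ bound (\ref{eq:goodness-of-estimate}) only gives $\sqrt{\mathrm{deg}(i)}\,\delta\Vert\boldsymbol{w}\Vert\asymp n\sqrt{p_{\mathrm{obs}}}\,\delta$, which is too lossy. Instead I would use that $\hat{\boldsymbol{w}}^{\mathrm{ub}}$ is independent of $\mathcal{G}$: the edges incident to $i$ appear independently with probability $p_{\mathrm{obs}}$, so $\mathbb{E}\sum_{j:(i,j)\in\mathcal{E}}|\hat{w}^{\mathrm{ub}}_j-w_j|=p_{\mathrm{obs}}\Vert\hat{\boldsymbol{w}}^{\mathrm{ub}}-\boldsymbol{w}\Vert_1\lesssim np_{\mathrm{obs}}\delta$, and a Bernstein bound—whose tail is governed by the elementwise cap $|\hat{w}^{\mathrm{ub}}_j-w_j|\leq\xi w_{\max}$ from (\ref{eq:weight-boundedness})—yields $\sum_{j:(i,j)\in\mathcal{E}}|\hat{w}^{\mathrm{ub}}_j-w_j|\lesssim np_{\mathrm{obs}}\delta+\xi\log n$ for all $i$. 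The domination hypothesis $|\hat{w}_j-w_j|\leq|\hat{w}^{\mathrm{ub}}_j-w_j|$ then transfers this to \emph{every} admissible $\hat{\boldsymbol{w}}$ at once. I expect this to be the main obstacle: it is here that both $\delta$ and $\xi$ must enter, and it is the device that replaces the data-dependent iterate by a fixed $\mathcal{G}$-independent majorant, decoupling the estimate from the comparison graph.

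Collecting the pieces into the basic inequality at $\tau=w_i^{\mathrm{mle}}$, the negative quadratic penalty $\gtrsim|\tau-w_i|^2np_{\mathrm{obs}}$ must be dominated by the two linear terms $\lesssim|\tau-w_i|(np_{\mathrm{obs}}\delta+\xi\log n)$ and $\lesssim|\tau-w_i|\sqrt{np_{\mathrm{obs}}\log n/L}$; dividing through by $|\tau-w_i|\,np_{\mathrm{obs}}$ gives $|w_i-w_i^{\mathrm{mle}}|\lesssim\delta+\frac{\log n}{np_{\mathrm{obs}}}\xi+\sqrt{\frac{\log n}{np_{\mathrm{obs}}L}}$, which is precisely (\ref{eq:MLEgap}) up to the explicit $w_{\min},w_{\max}$-dependent prefactor (the mild $\log L/\log n$ factor being absorbed in tracking the sub-exponential tails of the $L$-averaged comparisons across all $n$ vertices). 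A final union bound over the $n$ coordinates delivers the claimed $1-c_1 n^{-4}$ probability.
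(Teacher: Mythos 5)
Your proposal is correct, and all of the core estimates coincide with the paper's: the Pinsker lower bound on the expected KL gap (cf.\ (\ref{eq:PinskerLB})), the mean-value/Lipschitz control of the plug-in bias by $\frac{1}{4w_{\min}^2}\left|\tau-w_i\right|\sum_{j:(i,j)\in\mathcal{E}}\left|\hat{w}_j-w_j\right|$ (cf.\ (\ref{eq:l_perturb_UB})), and---the step you rightly flag as the crux---the Bernstein bound $\sum_{j:(i,j)\in\mathcal{E}}|\hat{w}^{\mathrm{ub}}_j-w_j|\lesssim np_{\mathrm{obs}}\delta+\xi\log n$ exploiting the independence of the fixed majorant $\hat{\boldsymbol{w}}^{\mathrm{ub}}$ from $\mathcal{G}$, with the domination hypothesis transferring the bound to every admissible $\hat{\boldsymbol{w}}$. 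Where you genuinely depart from the paper is in obtaining uniformity over $\tau$: the paper proves the single-$\tau$ statement (Lemma \ref{lemma-PointwiseError}), then builds an $\epsilon$-net on $[w_{\min},w_{\max}]$ of cardinality $O(n^2L)$, bounds the Lipschitz constant of the log-likelihood in $\tau$, and union-bounds over the net---which is precisely where the $\gamma=6+\frac{\log L}{\log n}$ exponent and hence the $\left(6+\frac{\log L}{\log n}\right)$ prefactor in (\ref{eq:MLEgap}) come from. Your observation that the data-dependent part of $\hat{\ell}_i(\tau)-\hat{\ell}_i(w_i)$ factors as $\log\frac{\tau}{w_i}\cdot S$ with $S=\sum_{j:(i,j)\in\mathcal{E}}(y_{i,j}-\mathbb{E}[y_{i,j}])$ free of both $\tau$ and $\hat{\boldsymbol{w}}$ eliminates the net entirely: one concentration event for $S$ per vertex suffices, and since $|\log\frac{\tau}{w_i}|\leq|\tau-w_i|/w_{\min}$ the fluctuation is automatically linear in $|\tau-w_i|$. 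This yields a cleaner argument and in fact a slightly stronger constant (no $\log L/\log n$ factor is needed), so your parenthetical attributing that factor to sub-exponential tails is the one mildly inaccurate remark---in the paper it is an artifact of the net cardinality, and in your route it simply does not arise.
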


In the regime where $L=O\left(\mathrm{poly}\left(n\right)\right)$ and $\frac{\|{\boldsymbol{w}^{(t)}}-\boldsymbol{w}\|}{\|\boldsymbol{w}\|} \asymp \delta\asymp \sqrt{\frac{\log n}{np_{\mathrm{obs}}L}}$,
Theorem \ref{thm:refinement} asserts that under appropriate conditions,
the coordinate-wise MLE $\boldsymbol{w}^{\mathrm{mle}}$ is expected to achieve a lower pointwise error than  $\boldsymbol{w}^{(t)}$ such that
\begin{equation}
\big\|\boldsymbol{w}^{\mathrm{mle}}-\boldsymbol{w}\big\|_{\infty}\lesssim\frac{\big\|\boldsymbol{w}^{(t)}-\boldsymbol{w}\big\|}{\left\Vert \boldsymbol{w}\right\Vert }+\frac{\log n}{np_{\mathrm{obs}}}\big\|\boldsymbol{w}^{(t)}-\boldsymbol{w}\big\|_{\infty}.
\label{eq:ShrinkRateMLE}
\end{equation}
When the replacement threshold $\xi_t$ is chosen to be on the same order as $\|\boldsymbol{w}^{\mathrm{mle}}-\boldsymbol{w}\big\|_{\infty}$, one can detect outliers and  drag down the elementwise
estimation error at a rate
\begin{equation}
\big\|\boldsymbol{w}^{(t+1)}-\boldsymbol{w}\big\|_{\infty}\lesssim\frac{\big\|\boldsymbol{w}^{(t)}-\boldsymbol{w}\big\|}{\left\Vert \boldsymbol{w}\right\Vert }+\frac{\log n}{np_{\mathrm{obs}}}\big\|\boldsymbol{w}^{(t)}-\boldsymbol{w}\big\|_{\infty}.
\label{eq:ShrinkRate}
\end{equation}
One important feature is that the same collection of samples can be \emph{reused
}across all iterations at the successive refinement stage, provided that we can identify in each cycle another slightly looser estimate that
is independent of the samples. 
Another property that will be made clear in the analysis is that the $\ell_2$ estimation error obeys
\begin{align}
	\qquad\qquad \frac{ \| \boldsymbol{w}^{(t)}  - \boldsymbol{w}\|}{\| \boldsymbol{w} \|} ~\lesssim~ \frac{ \| \boldsymbol{w}^{(0)}  - \boldsymbol{w}\|}{\| \boldsymbol{w} \|}
	~\lesssim~  \sqrt{\frac{\log n}{np_{\mathrm{obs}}L}},  \label{eq:L2_wt}  
\end{align}
which further gives
\begin{align}
	\big\|\boldsymbol{w}^{(t+1)}-\boldsymbol{w}\big\|_{\infty} ~\lesssim~ \sqrt{\frac{\log n}{np_{\mathrm{obs}}L}} +\frac{\log n}{np_{\mathrm{obs}}}\big\|\boldsymbol{w}^{(t)}-\boldsymbol{w}\big\|_{\infty}.
\label{eq:ShrinkRate-clean}
\end{align}
We recognize that the non-negative recursive sequence $\{f_n\}$ obeying the recurrence equation $f_n = a + bf_{n-1}$ ($0<b < 1$) must converge to a point\footnote{To see this, one can rewrite the recurrence inequality as $f_n - \frac{a}{1-b} = b( f_{n-1} - \frac{a}{1-b})$, which gives $f_{n}- \frac{a}{1-b} = b^n (f_0- \frac{a}{1-b})$.  When $n$ tends to infinity, this gives $f_n - \frac{a}{1-b} = 0$. } $f_{\infty} = \frac{a}{1-b}$.  When specialized to (\ref{eq:ShrinkRate-clean}), this fact implies that the output of Spectral MLE obeys
\begin{align*}
\big\|\boldsymbol{w}^{(T)} -\boldsymbol{w}\big\|_{\infty} ~\lesssim~   \sqrt{\frac{\log n}{np_{\mathrm{obs}}L}},
\end{align*}
as long as ${\log n}/{(np_{\mathrm{obs}})}$ is sufficiently small and $T$ is sufficiently large. This is minimally apart from the ground truth.

\subsection{Discussion\label{sub:Discussion-SpectralMLE}}

\begin{figure*}[t]
\begin{centering}
\begin{tabular}{cc}
\includegraphics[width=0.47\textwidth]{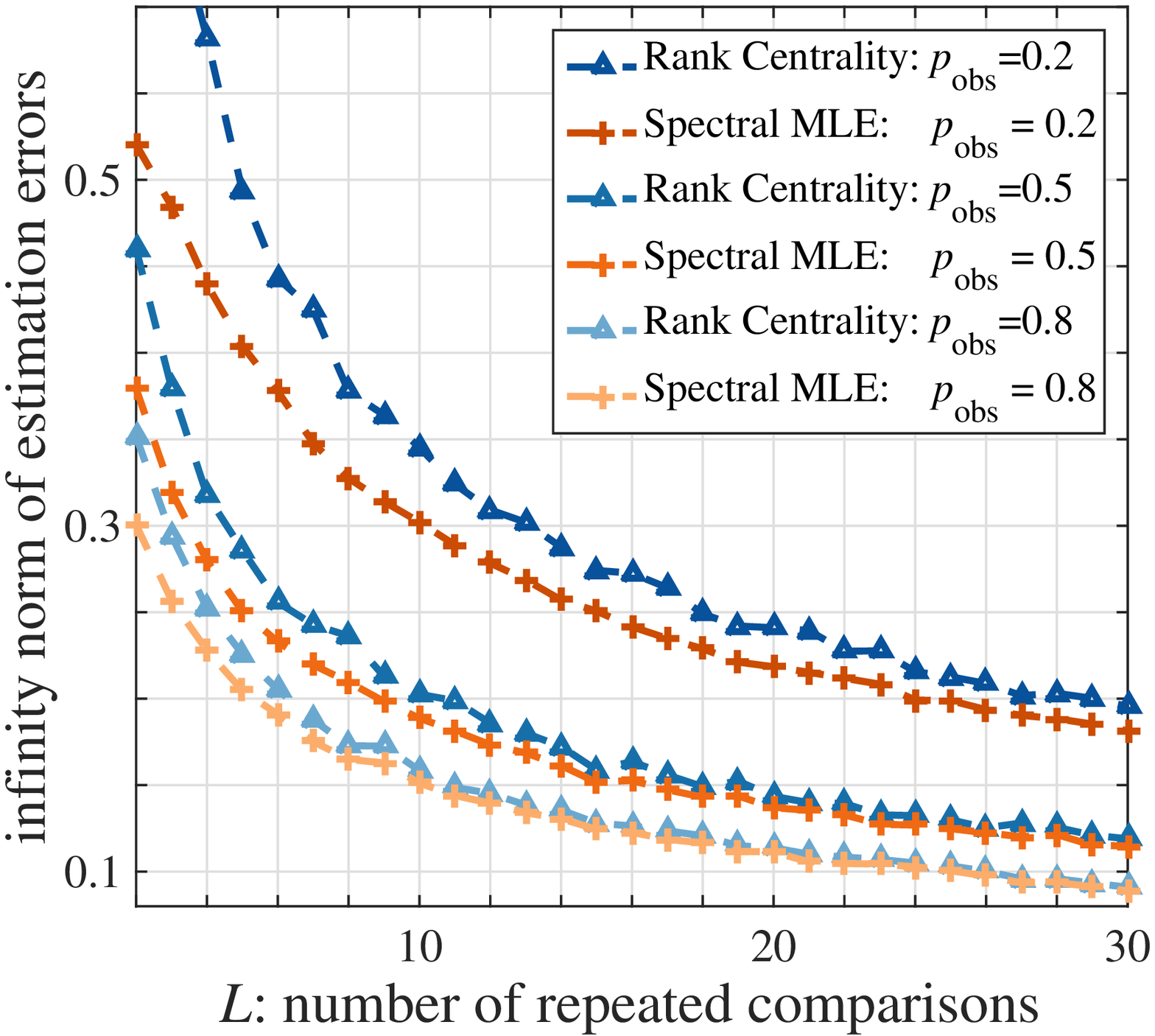} & \includegraphics[width=0.47\textwidth]{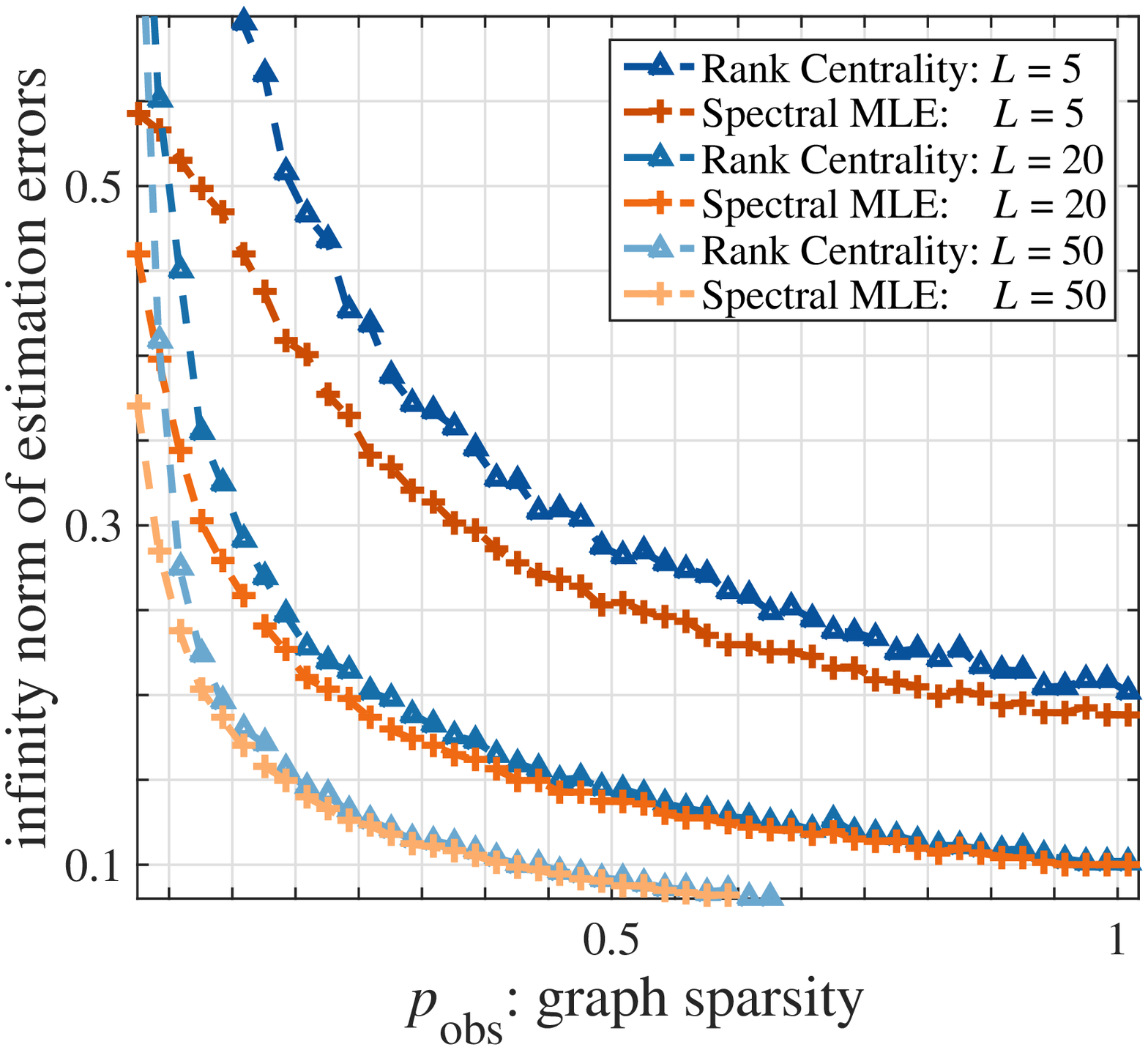}   \tabularnewline
(a) & (b) 
\tabularnewline
\end{tabular}
\begin{tabular}{c}
\includegraphics[width=0.47\textwidth]{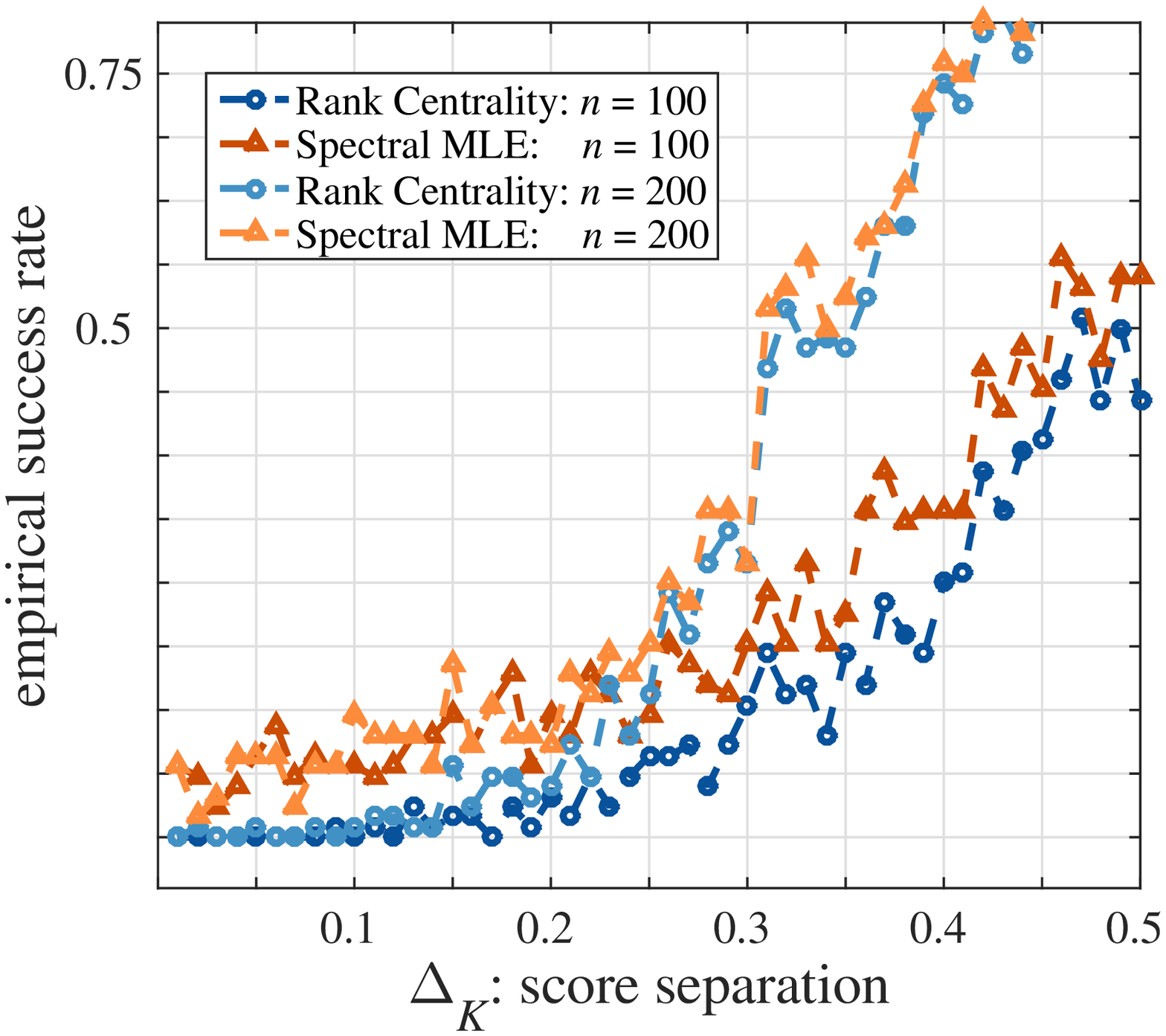} \tabularnewline
(c) \tabularnewline
\end{tabular}
\par\end{centering}

\caption{(a) Empirical $\ell_{\infty}$ loss v.s. $L$; (b) $\ell_{\infty}$
loss v.s. $p_{\mathrm{obs}}$; (c) Rate of success in top-$K$ identification ($n=100,200$).}
\label{figure:Numerical}
\end{figure*}

\textbf{Choice of Initialization}. Careful readers will remark that
the success of Spectral MLE can be guaranteed by a broader
selection of initialization procedures beyond Rank Centrality. Indeed, Theorem
\ref{thm:refinement} and subsequent analyses lead to the following
assertion: as long as the initialization method is able to produce an initial estimate
$\boldsymbol{w}^{(0)}$ that is reasonably faithful in the $\ell_2$ sense 
\begin{equation}
	\frac{\|\boldsymbol{w}^{(0)}-\boldsymbol{w}\| }{ {\|\boldsymbol{w}\|} } ~\lesssim~ \sqrt{ \frac{\log n} {np_{\mathrm{obs}}L}},
\label{eq:Initialization}
\end{equation}
then Spectral MLE will converge to a pointwise optimal preference
$\boldsymbol{w}^{(T)}$ obeying 
\[
\|\boldsymbol{w}^{(T)}-\boldsymbol{w}\|_{\infty} ~\lesssim~ \sqrt{\frac{\log n}{np_{\mathrm{obs}}L}}.
\]

\textbf{Initialization via Global MLE}. One would naturally wonder whether we can employ the global MLE (computed over $\boldsymbol{y}^{\mathrm{init}}$) to seed the iterative refinement stage (applied over $\boldsymbol{y}^{\mathrm{iter}}$). 
In fact, the state-of-the-art analysis (with a different but order-wise equivalent
model) \citep{Negahban2012} asserts that the global MLE
satisfies the desired $\ell_{2}$ property (\ref{eq:Initialization})
for at least two cases:  (a) complete
graphs, i.e. $p_{\mathrm{obs}}=1$, and (b) $ $Erd\H{o}s\textendash{}R\'enyi
graphs with no repeated comparisons, i.e. $L=1$. 
In these two cases, the proposed algorithm achieves minimal $\ell_{\infty}$ errors if we initialize it via the global MLE.

Nevertheless, whether the global MLE achieves minimal $\ell_{2}$
loss for other configurations $\left(L,p_{\mathrm{obs}}\right)$
has not been established. The analytical bottleneck seems to stem
from an underlying bias-variance tradeoff when accounting for
two successive randomness mechanisms: the random graph $\mathcal{G}$
and the repeated comparisons generated over $\mathcal{G}$. In general,
 $y_{i,j}^{(l)}$'s are not jointly independent
 unless we condition on $\mathcal{G}$. In
contrast, the above two special cases amount to two extreme situations:
(a) the randomness of $\mathcal{G}$ goes away when $p_{\mathrm{obs}}=1$;
(b) the condition $L=1$ avoids repeated sampling. Nevertheless, these two cases alone (as well
as the model in Theorem 4 of \citep{Negahban2012}) are not sufficient
in characterizing the complete tradeoff between graph sparsity and
the quality of the acquired comparisons.

\subsection{Numerical Experiments\label{sub:Numerical-Experiments}}

A series of synthetic experiments is conducted to demonstrate the
practical applicability of Spectral MLE. The important implementation
parameters in our approach is the choice of $c_{2}$ and $c_{3}$ given in Theorem
\ref{thm:SpectralMLE}, which specify $T$ and $\xi_{t}$, respectively. In
all numerical simulations performed here, we  pick $c_{2}=5$
and $c_{3}=1$, and do not split samples. 
We focus on the case where $n=100$, where each reported result is calculated by
averaging over 200 Monte Carlo trials.

We first examine the $\ell_{\infty}$ error of the score estimates. The latent scores are generated
uniformly over $\left[0.5,1\right]$. For each $\left(p_{\mathrm{obs}},L\right)$,
the paired comparisons are randomly generated as per the BTL model,
and we perform score inference by means of both Rank Centrality and
Spectral MLE.  Fig. \ref{figure:Numerical}(a)
(resp. Fig. \ref{figure:Numerical}(b)) illustrates the empirical tradeoff
between the pointwise score estimation accuracy and the number $L$
of repeated comparisons (resp. graph sparsity $p_{\mathrm{obs}}$).
It can be seen from these plots that the proposed Spectral MLE outperforms
Rank Centrality uniformly over all configurations, corroborating
our theoretical results. Interestingly, the performance gain is the most
significant under sparse graphs in the presence of low-resolution comparisons (i.e.
when $p_{\mathrm{obs}}$ and $L$ are small).

Next, we study the success
rate of top-$K$ identification as the number $n$ of items
varies. We generate the latent scores randomly over $\left[0.5,1\right]$, except that a separation $\Delta_K$ is imposed between items $K$ and $K+1$. The results are shown in Fig. \ref{figure:Numerical}(c) for the case where $p_{\mathrm{obs}}=0.2$,
and $L=5$. As can be seen, Spectral MLE achieves higher ranking accuracy compared to Rank Centrality for all these situations. Interestingly, the benefit of Spectral MLE relative to Rank Centrality is more apparent in the regime where the score separation is small. In addition, it seems that Rank Centrality is capable of achieving good ranking accuracy in the randomized model we simulate, and we leave the theoretical analysis for future work.

\section{Conclusion\label{sec:Conclusion}}

This paper investigates rank aggregation from pairwise data that emphasizes
the top-$K$ items. We developed a nearly linear-time algorithm that performs as well  as the
best model aware paradigm, from a minimax perspective.
 The proposed algorithm returns the indices of the best-$K$ items in accordance
 to a carefully tuned preference score estimate, which
 is obtained by combining a spectral method and a coordinate-wise MLE.
Our results uncover the identifiability limit of top-$K$ ranking,
which is dictated by the preference separation between the
$K^{\text{th}}$ and $(K+1)^{\text{th}}$  items.

This paper comes with some limitations in developing tight sample
complexity bounds under general graphs. The performances of Spectral MLE under other sampling models are worth investigating \citep{osting2015analysis}. 
In addition, it remains to characterize both statistical and computational 
ranking
limits for other choice models (e.g. the Plackett-Luce model \citep{hajek2014minimax}). It would also be interesting to consider the case where the paired comparisons are drawn from a mixture of BTL models (e.g. \citep{oh2014learning}), as well as the collaborative ranking  setting where one aggregates the item preferences from a pool of different users in order to infer rankings for each individual user (e.g. \citep{lu2014individualized,park2015preference}).


\acks{C. Suh was partly supported by the ICT R\&D program of MSIP/IITP [B0101-15-1272]. 
Y. Chen would like to thank Yuejie Chi for helpful discussion.  }


\newpage

\appendix

\section{Performance Guarantees for Spectral MLE\label{sec:Proof-SpectralMLE}}

In this section, we establish the theoretical guarantees of Spectral
MLE in controlling the ranking accuracy and $\ell_{\infty}$ estimation
errors, which are the subjects of Theorem \ref{thm:SpectralMLE} and
Theorem \ref{thm:refinement}. The proof of Theorem \ref{thm:SpectralMLE}
relies heavily on the claim of Theorem \ref{thm:refinement}; for
this reason, we present the proofs of Theorem \ref{thm:SpectralMLE}
and Theorem \ref{thm:refinement} in a reverse order. Before proceeding,
we recall that the coordinate-wise log-likelihood of $\tau$ is given
by 
\begin{equation}
\frac{1}{L}\log\mathcal{L}\left(\tau,\hat{\boldsymbol{w}}_{\backslash i};\boldsymbol{y}_{i}\right):=\sum_{j:(i,j)\in\mathcal{E}}y_{i,j}\log\frac{\tau}{\tau+\hat{w}_{j}}+\left(1-y_{i,j}\right)\log\frac{\hat{w}_{j}}{\tau+\hat{w}_{j}},\label{eq:defn-LL}
\end{equation}
and we shall use $\boldsymbol{w}_{\backslash i}$ (resp. $\hat{\boldsymbol{w}}_{\backslash i}$)
to denote the vector $\boldsymbol{w}=\left[w_{1},\cdots,w_{n}\right]$
(resp. $\hat{\boldsymbol{w}}=\left[\hat{w}_{1},\cdots,\hat{w}_{n}\right]$)
excluding the entry $w_{i}$ (resp. $\hat{w}_{i}$).

\subsection{Proof of Theorem \ref{thm:refinement}\label{sub:achievability-proof}}

To prove Theorem \ref{thm:refinement}, we need to demonstrate that
for every $\tau\in\left[w_{\min},w_{\max}\right]$ that is sufficiently
separated from  $w_{i}$ (or, more formally, $\left|\tau-w_{i}\right|\gtrsim\max\left\{ \delta+\frac{\xi\log n}{np_{\mathrm{obs}}},\text{ }\sqrt{\frac{\log n}{np_{\mathrm{obs}}L}}\right\} $),
the coordinate-wise likelihood satisfies
\begin{equation}
\log\mathcal{L}\left(w_{i},\hat{\boldsymbol{w}}_{\backslash i};\boldsymbol{y}_{i}\right)>\log\mathcal{L}\left(\tau,\hat{\boldsymbol{w}}_{\backslash i};\boldsymbol{y}_{i}\right)\label{eq:likelihood-compare}
\end{equation}
and, therefore, $\tau$ cannot be the coordinate-wise MLE. 

To begin with, we provide a lemma (which will be proved later) that
concerns (\ref{eq:likelihood-compare}) for \emph{any single} $\tau$
that is well separated from $w_{i}$.

\begin{lemma}\label{lemma-PointwiseError}Fix any $\gamma\geq3$.
Under the conditions of Theorem \ref{thm:refinement}, for any $\tau\in\left[w_{\min},w_{\max}\right]$
obeying 
\begin{equation}
\left|w_{i}-\tau\right|>\gamma\cdot\frac{w_{\max}^{5}}{w_{\min}^{4}}\max\left\{ \frac{25}{4}\left(\delta+\frac{\xi\log n}{np_{\mathrm{obs}}}\right),\text{ }\text{ }20\sqrt{\frac{\log n}{np_{\mathrm{obs}}L}}\right\} ,\label{eq:MLEdev}
\end{equation}
one has 
\begin{equation}
\frac{1}{L}\log\mathcal{L}\left(w_{i},\hat{\boldsymbol{w}}_{\backslash i};\boldsymbol{y}_{i}\right)-\frac{1}{L}\log\mathcal{L}\left(\tau,\hat{\boldsymbol{w}}_{\backslash i};\boldsymbol{y}_{i}\right)>\frac{w_{\max}^{6}}{100w_{\min}^{6}}\frac{\log n}{L}.\label{eq:Likelihood}
\end{equation}
with probability exceeding $1-6n^{-\gamma}$; this holds
simultaneously for all $\hat{\boldsymbol{w}}_{i}\in\left[w_{\min},w_{\max}\right]^{n}$
satisfying $|\hat{w}_i-w_i|\leq | \hat{w}_i^{\mathrm{ub}} -w_i|$, $1\leq i\leq n$. \end{lemma}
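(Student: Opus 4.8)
The plan is to prove the strict inequality \eqref{eq:Likelihood} by writing the normalized log-likelihood gap in closed form, decomposing it into a deterministic ``signal'' that grows \emph{quadratically} in $|\tau-w_i|$ together with two lower-order perturbations that are only \emph{linear} in $|\tau-w_i|$, and then showing that the separation hypothesis \eqref{eq:MLEdev} forces the signal to dominate. First I would exploit the structure of \eqref{eq:defn-LL}: collecting, per edge, the coefficient of $y_{i,j}$ and the constant term, the logarithms telescope into
\[
\frac{1}{L}\log\mathcal{L}\left(w_{i},\hat{\boldsymbol{w}}_{\backslash i};\boldsymbol{y}_{i}\right)-\frac{1}{L}\log\mathcal{L}\left(\tau,\hat{\boldsymbol{w}}_{\backslash i};\boldsymbol{y}_{i}\right)=\log\frac{w_{i}}{\tau}\sum_{j:(i,j)\in\mathcal{E}}y_{i,j}+\sum_{j:(i,j)\in\mathcal{E}}\log\frac{\tau+\hat{w}_{j}}{w_{i}+\hat{w}_{j}}.
\]
The crucial feature is that $\hat{\boldsymbol{w}}_{\backslash i}$ enters only through the second, \emph{data-free} sum, whereas all the stochasticity is confined to $\sum_{j}y_{i,j}$.

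Next I would take the expectation of this gap conditional on $\mathcal{G}$ and on the (independent) scores $\hat{\boldsymbol{w}}$, using $\mathbb{E}[y_{i,j}]=\tfrac{w_i}{w_i+w_j}$, and split the result into three pieces. The part evaluated at the \emph{true} weights $w_j$ reassembles exactly $\sum_{j:(i,j)\in\mathcal{E}}\mathsf{KL}\!\left(\tfrac{w_i}{w_i+w_j}\,\big\Vert\,\tfrac{\tau}{\tau+w_j}\right)$, which by the Pinsker-type second-order lower bound \eqref{eq:penalty} (a uniform lower bound on the local curvature of the KL over $[w_{\min},w_{\max}]$) obeys $\gtrsim|\tau-w_i|^{2}np_{\mathrm{obs}}$; here I invoke the Erd\H{o}s--R\'enyi degree concentration $\mathrm{deg}(i)\asymp np_{\mathrm{obs}}$, valid since $p_{\mathrm{obs}}>c_{0}\log n/n$. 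The remaining two pieces are the \emph{bias} $\sum_{j}\big[\log\tfrac{\tau+\hat{w}_{j}}{w_{i}+\hat{w}_{j}}-\log\tfrac{\tau+w_{j}}{w_{i}+w_{j}}\big]$ incurred by plugging in $\hat{w}_{j}$ in place of $w_{j}$, and the \emph{fluctuation} $\log\tfrac{w_i}{\tau}\sum_{j}(y_{i,j}-\mathbb{E}[y_{i,j}])$.

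The fluctuation is the easy one: since $\log\tfrac{w_i}{\tau}=O(|\tau-w_i|)$ on $[w_{\min},w_{\max}]$ and $\sum_{j}(y_{i,j}-\mathbb{E}[y_{i,j}])$ is an average of $L\cdot\mathrm{deg}(i)$ independent centered Bernoulli summands, a Bernstein/Hoeffding bound yields a deviation $\lesssim|\tau-w_i|\sqrt{np_{\mathrm{obs}}\log n/L}$ with probability $1-O(n^{-\gamma})$ — precisely the regime governed by the $\sqrt{\log n/(np_{\mathrm{obs}}L)}$ branch of \eqref{eq:MLEdev}. The bias is more delicate: a first-order expansion gives each summand as $(w_i-\tau)\tfrac{\hat{w}_j-w_j}{(\tau+w_j)(w_i+w_j)}+O(|\tau-w_i||\hat{w}_j-w_j|^{2})$, so the bias is a \emph{signed} linear functional of $\hat{\boldsymbol{w}}-\boldsymbol{w}$ over the neighborhood of $i$. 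A naive $\ell_\infty$ or Cauchy--Schwarz bound is too weak here (by a factor $np_{\mathrm{obs}}/\log n$, resp.\ $1/\sqrt{p_{\mathrm{obs}}}$), so the cancellation inside the sum must be retained. Since $\hat{\boldsymbol{w}}^{\mathrm{ub}}$ — and hence the entire admissible box — is independent of $\mathcal{G}$, I treat $\sum_{j:(i,j)\in\mathcal{E}}$ as a sum over a random edge set: its mean $p_{\mathrm{obs}}\sum_{j}(\hat{w}_j-w_j)c_j$ is bounded via Cauchy--Schwarz and $\|\boldsymbol{w}\|\asymp\sqrt{n}$ by $\lesssim np_{\mathrm{obs}}|\tau-w_i|\,\|\hat{\boldsymbol{w}}-\boldsymbol{w}\|/\|\boldsymbol{w}\|$, matching the stability bound \eqref{eq:estimation error} and the $\delta$ term, while a Bernstein bound on the sub-sampling fluctuation — whose worst-case summand is controlled by the $\ell_\infty$ radius $\xi w_{\max}$ from \eqref{eq:weight-boundedness} — contributes the $|\tau-w_i|\,\xi\log n$ piece, i.e.\ the $\tfrac{\xi\log n}{np_{\mathrm{obs}}}$ branch of \eqref{eq:MLEdev}.

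Finally I would combine the three estimates. Under \eqref{eq:MLEdev}, $|\tau-w_i|$ exceeds both $\delta+\tfrac{\xi\log n}{np_{\mathrm{obs}}}$ and $\sqrt{\log n/(np_{\mathrm{obs}}L)}$ by a margin dictated by the constants $\gamma,\,\tfrac{25}{4},\,20$, so the quadratic signal $\gtrsim|\tau-w_i|^{2}np_{\mathrm{obs}}$ dominates twice the sum of the (linear) bias and fluctuation bounds; what remains is $\gtrsim|\tau-w_i|^{2}np_{\mathrm{obs}}\gtrsim\tfrac{\log n}{L}$, and the prefactor $\tfrac{w_{\max}^{5}}{w_{\min}^{4}}$ in \eqref{eq:MLEdev} is tuned so that this clears the explicit threshold $\tfrac{w_{\max}^{6}}{100w_{\min}^{6}}\tfrac{\log n}{L}$ in \eqref{eq:Likelihood}. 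For the ``simultaneously for all $\hat{\boldsymbol{w}}$'' claim, I note that the fluctuation term is $\hat{\boldsymbol{w}}$-free, while the bias bounds depend on $\hat{\boldsymbol{w}}$ only monotonically through $\|\hat{\boldsymbol{w}}-\boldsymbol{w}\|$ and $\|\hat{\boldsymbol{w}}-\boldsymbol{w}\|_\infty$, each dominated by the envelope $\hat{\boldsymbol{w}}^{\mathrm{ub}}$ via \eqref{eq:weight-boundedness}--\eqref{eq:goodness-of-estimate}; worst-casing the Bernstein variance proxy and maximal summand over the box (both monotone in $|\hat{w}_j-w_j|$) yields a single high-probability event valid uniformly over the box. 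I expect the main obstacle to be exactly this uniform control of the bias term: preserving the random-subset cancellation (which hinges on the independence of $\hat{\boldsymbol{w}}^{\mathrm{ub}}$ from $\mathcal{G}$) while simultaneously making the graph-sampling concentration hold over the entire continuum of admissible $\hat{\boldsymbol{w}}$, rather than for a single fixed estimate.
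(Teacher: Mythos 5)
Your proposal is correct and follows essentially the same route as the paper: the same decomposition into a Pinsker/KL quadratic signal, a deterministic plug-in bias controlled by a Lipschitz bound and a Bernstein inequality over the graph randomness (exploiting the independence of the envelope $\hat{\boldsymbol{w}}^{\mathrm{ub}}$ from $\mathcal{G}$), and a Bernstein-controlled fluctuation of $\log\tfrac{w_i}{\tau}\sum_j (y_{i,j}-\mathbb{E}[y_{i,j}])$, with uniformity over the box obtained by monotone domination via $\hat{\boldsymbol{w}}^{\mathrm{ub}}$. The ``obstacle'' you flag at the end is resolved exactly as you suggest --- only the single envelope needs the graph-sampling concentration, since the bias bound is pointwise monotone in $|\hat{w}_j - w_j|$ --- so no sign cancellation is actually needed (the paper bounds $\sum_j |\hat{w}_j - w_j|$ in absolute value).
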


To establish Theorem \ref{thm:refinement}, we still need to derive
a uniform control over all $\tau$ satisfying (\ref{eq:MLEdev}).
This will be accomplished via a standard covering argument. Specifically,
for any small quantity $\epsilon>0$, we construct a set $\mathcal{N}_{\epsilon}$
(called an $\epsilon$-cover) within the interval $\left[w_{\min},w_{\max}\right]$
such that for any $\tau\in\left[w_{\min},w_{\max}\right]$, there
exists an $\tau_{0}\in\mathcal{N}_{\epsilon}$ obeying 
\begin{equation}
\left|\tau-\tau_{0}\right|\leq\epsilon\quad\text{and}\quad|\tau_{0}-w_{i}|\geq|\tau-w_{i}|.
\end{equation}
It is self-evident that one can produce such a cover $\mathcal{N}_{\epsilon}$
with cardinality $\left\lceil \frac{w_{\max}}{\epsilon}\right\rceil +1$.
If we set $\gamma=6+\frac{\log L}{\log n}$ in Lemma \ref{lemma-PointwiseError} (which obeys $\gamma=\Theta(1)$ since $L=O(\mathrm{poly}(n))$),
taking the union bound over $\mathcal{N}_{\epsilon}$ gives 
\begin{equation}
\frac{1}{L}\log\mathcal{L}\left(w_{i},\hat{\boldsymbol{w}}_{\backslash i};\boldsymbol{y}_{i}\right)-\frac{1}{L}\log\mathcal{L}\left(\tau_{0},\hat{\boldsymbol{w}}_{\backslash i};\boldsymbol{y}_{i}\right)>\frac{w_{\max}^{6}}{100w_{\min}^{6}}\frac{\log n}{L}\label{eq:discretize}
\end{equation}
simultaneously over all $\tau_{0}\in\mathcal{N}_{\epsilon}$ obeying
\[
\left|w_{i}-\tau_{0}\right| ~>~ \frac{\left(6+\frac{\log L}{\log n}\right)w_{\max}^{5}}{w_{\min}^{4}}\max\left\{ \frac{25}{4}\left(\delta+\frac{\xi\log n}{np_{\mathrm{obs}}}\right),\text{ }\text{ }20\sqrt{\frac{\log n}{np_{\mathrm{obs}}L}}\right\} ;
\]
this occurs with probability at least $1-6\left|\mathcal{N}_{\epsilon}\right|n^{-6-\frac{\log L}{\log n}}$.

We proceed by bounding the difference between $\log\mathcal{L}\left(\tau,\hat{\boldsymbol{w}}_{\backslash i};\boldsymbol{y}_{i}\right)$
and $\log\mathcal{L}\left(\tau_{0},\hat{\boldsymbol{w}}_{\backslash i};\boldsymbol{y}_{i}\right)$ for any $|\tau-\tau_0| \leq \epsilon$.
To achieve this, we first recognize that the Lipschitz constant of
$\frac{1}{L}\log\mathcal{L}\left(\tau,\hat{\boldsymbol{w}}_{\backslash i};\boldsymbol{y}_{i}\right)$ 
(cf. (\ref{eq:defn-LL})) w.r.t.~$\tau$ is bounded above by the following inequality:
\begin{align*}
\frac{1}{L}\cdot\left|\frac{\partial\log\mathcal{L}\left(\tau,\hat{\boldsymbol{w}}_{\backslash i};\boldsymbol{y}_{i}\right)}{\partial\tau}\right| & ~=~ \left|\sum_{j:(i,j)\in\mathcal{E}}y_{i,j}\left(\frac{1}{\tau}-\frac{1}{\tau+\hat{w}_{j}}\right)-\left(1-y_{i,j}\right)\frac{1}{\tau+\hat{w}_{j}}\right|\\
 & \overset{(\text{a})}{\leq}~ \mathrm{deg}\left(i\right)\cdot\frac{2}{w_{\min}} 
~\overset{(\text{b})}{\leq}~ \frac{12}{5}\frac{np_{\mathrm{obs}}}{w_{\min}},
\end{align*}
where (a) follows since 
\begin{align*}
& \left|y_{i,j}\left(\frac{1}{\tau}-\frac{1}{\tau+\hat{w}_{j}}\right)-\left(1-y_{i,j}\right)\frac{1}{\tau+\hat{w}_{j}}\right|
=\left|\frac{y_{i,j}}{\tau}-\frac{1}{\tau+\hat{w}_{j}}\right|  \\
& \quad ~\leq~\left|\frac{y_{i,j}}{\tau}\right|+\left|\frac{1}{\tau+\hat{w}_{j}}\right| 
~<~\frac{2}{w_{\min}},
\end{align*}
and (b) holds since $\mathrm{deg}(i)\leq2.4np_{\mathrm{obs}}$ with
probability $1-O\left(n^{-10}\right)$ as long as $\frac{\log n}{np_{\mathrm{obs}}}$ is sufficiently small. As a result, by picking
\begin{equation}
\epsilon=\frac{\frac{w_{\max}^{6}}{100w_{\min}^{6}}\frac{\log n}{L}}{\frac{12}{5}\frac{np_{\mathrm{obs}}}{w_{\min}}}=\frac{w_{\max}^{6}}{240w_{\min}^{5}}\frac{\log n}{np_{\mathrm{obs}}L},\label{eq:choice-epsilon}
\end{equation}
one can make sure that for any $\left|\tau-\tau_{0}\right|\leq\epsilon$,
\begin{align}
\frac{1}{L}\log\mathcal{L}\left(\tau,\hat{\boldsymbol{w}}_{\backslash i};\boldsymbol{y}_{i}\right) & -\frac{1}{L}\log\mathcal{L}\left(\tau_{0},\hat{\boldsymbol{w}}_{\backslash i};\boldsymbol{y}_{i}\right)\leq\epsilon\cdot\frac{12}{5}\frac{np_{\mathrm{obs}}}{w_{\min}},
\end{align}
\begin{align}
\Rightarrow\quad\frac{1}{L}\log\mathcal{L}\left(\tau,\hat{\boldsymbol{w}}_{\backslash i};\boldsymbol{y}_{i}\right) & <\frac{1}{L}\log\mathcal{L}\left(\tau_{0},\hat{\boldsymbol{w}}_{\backslash i};\boldsymbol{y}_{i}\right)+\frac{w_{\max}^{6}}{100w_{\min}^{6}}\frac{\log n}{L}.\label{eq:gap}
\end{align}
In addition, with the above choice of $\epsilon$, the cardinality of the $\epsilon$-cover is bounded above
by
\[
\left|\mathcal{N}_{\epsilon}\right|\leq\left\lceil \frac{w_{\max}}{\epsilon}\right\rceil +1=\left\lceil \frac{240np_{\mathrm{obs}}L}{\log n}\cdot\frac{w_{\min}^{5}}{w_{\max}^{5}}\right\rceil +1\ll n^{2}L
\]
for any sufficiently large $n$.

Putting (\ref{eq:discretize}) and (\ref{eq:gap}) together suggests
that for \emph{all} $\tau\in[w_{\min},w_{\max}]$ sufficiently apart
from the ground truth $w_{i}$, namely,
\begin{equation}
\forall\tau\in[w_{\min},w_{\max}]:\quad\left|\tau-w_{i}\right|\geq\frac{\left(6+\frac{\log L}{\log n}\right)w_{\max}^{5}}{w_{\min}^{4}}\max\left\{ \frac{25}{4}\left(\delta+\frac{\xi\log n}{np_{\mathrm{obs}}}\right),\text{ }\text{ }20\sqrt{\frac{\log n}{np_{\mathrm{obs}}L}}\right\} ,\label{eq:tau-distance}
\end{equation}
one necessarily has
\begin{align}
 & \frac{1}{L}\log\mathcal{L}\left(w_{i},\hat{\boldsymbol{w}}_{\backslash i};\boldsymbol{y}_{i}\right)-\frac{1}{L}\log\mathcal{L}\left(\tau,\hat{\boldsymbol{w}}_{\backslash i};\boldsymbol{y}_{i}\right)\nonumber \\
 & =\left\{ \frac{1}{L}\log\mathcal{L}\left(w_{i},\hat{\boldsymbol{w}}_{\backslash i};\boldsymbol{y}_{i}\right)-\frac{1}{L}\log\mathcal{L}\left(\tau_{0},\hat{\boldsymbol{w}}_{\backslash i};\boldsymbol{y}_{i}\right)\right\} +\left\{ \frac{1}{L}\log\mathcal{L}\left(\tau_{0},\hat{\boldsymbol{w}}_{\backslash i};\boldsymbol{y}_{i}\right)-\frac{1}{L}\log\mathcal{L}\left(\tau,\hat{\boldsymbol{w}}_{\backslash i};\boldsymbol{y}_{i}\right)\right\} \nonumber \\
 & >0,
\end{align}
with probability at least $1-6\left|\mathcal{N}_{\epsilon}\right|n^{-6-\frac{\log L}{\log n}}-O\left(n^{-4}\right)
\geq 1 - 6n^{2}Ln^{-6-\frac{\log L}{\log n}}-O(n^{-4})=1-O(n^{-4})$.
Consequently, any $\tau\in\left[w_{\min},w_{\max}\right]$ that obeys
(\ref{eq:tau-distance}) cannot be the coordinate-wise MLE, which
in turn justifies the claim (\ref{eq:MLEgap}) of Theorem \ref{thm:refinement}
(we present Theorem \ref{thm:refinement} using slightly looser constants for notational simplicity).

\begin{proof}[\textbf{of Lemma \ref{lemma-PointwiseError}}]
We start by evaluating the true coordinate-wise likelihood gap 
\begin{equation}
\log\mathcal{L}\left(w_{i},\boldsymbol{w}_{\backslash i};\boldsymbol{y}_{i}\right)-\log\mathcal{L}\left(\tau,\boldsymbol{w}_{\backslash i};\boldsymbol{y}_{i}\right)\label{eq:true-gap}
\end{equation}
for any fixed $\tau\neq w_{i}$ independent of $\boldsymbol{y}_{i}$.
Here, $\boldsymbol{y}_{i}:=\left\{ y_{i,j}\mid i: (i,j)\in\mathcal{E}\right\} $
is assumed to be generated under the BTL model parameterized by $\boldsymbol{w}$,
which clearly obeys
\[
\mathbb{E}\left[y_{i,j}\right]=\frac{w_{i}}{w_{i}+w_{j}}\quad\text{and}\quad{\bf Var}\left[y_{i,j}\right]=\frac{1}{L}\frac{w_{i}w_{j}}{\left(w_{i}+w_{j}\right)^{2}}.
\]
In order to calculate the mean of (\ref{eq:true-gap}), we
rewrite the likelihood function as
\begin{eqnarray}
\frac{1}{L}\log\mathcal{L}\left(\tau,\boldsymbol{w}_{\backslash i};\boldsymbol{y}_{i}\right) & = & \sum_{j:\left(i,j\right)\in\mathcal{E}}\left\{ y_{i,j}\log\left(\frac{\tau}{\tau+w_{j}}\right)+\left(1-y_{i,j}\right)\log\left(\frac{w_{j}}{\tau+w_{j}}\right)\right\} \label{eq:log-likelihood-original}\\
 & = & \sum_{j:\left(i,j\right)\in\mathcal{E}}y_{i,j}\log\left(\frac{\tau}{w_{j}}\right)+\sum_{j:\left(i,j\right)\in\mathcal{E}}\log\left(\frac{w_{j}}{\tau+w_{j}}\right).\label{eq:log-likelihood}
\end{eqnarray}
Taking expectation w.r.t. $\boldsymbol{y}_{i}$ using the form (\ref{eq:log-likelihood-original})
reveals that
\begin{align}
 & \mathbb{E}\left[\left.\frac{1}{L}\log\mathcal{L}\left(w_{i},\boldsymbol{w}_{\backslash i};\boldsymbol{y}_{i}\right)-\frac{1}{L}\log\mathcal{L}\left(\tau,\boldsymbol{w}_{\backslash i};\boldsymbol{y}_{i}\right)\right|\mathcal{G}\right] \nonumber\\
& \quad =\sum_{j:\left(i,j\right)\in\mathcal{E}}\left\{ \frac{w_{i}}{w_{i}+w_{j}}\log\left(\frac{\frac{w_{i}}{w_{i}+w_{j}}}{\frac{\tau}{\tau+w_{j}}}\right)+\frac{w_{j}}{w_{i}+w_{j}}\log\left(\frac{\frac{w_{j}}{w_{i}+w_{j}}}{\frac{w_{j}}{\tau+w_{j}}}\right)\right\} \nonumber \\
 & \quad=\sum_{j:\left(i,j\right)\in\mathcal{E}}\mathsf{KL}\left(\frac{w_{i}}{w_{i}+w_{j}}\text{ }\Big\|\text{ }\frac{\tau}{\tau+w_{j}}\right), \label{eq:LB_KL_E}
\end{align}
where $\mathsf{KL}\left(p\|q\right)$ stands for the KL divergence of $\text{Bernoulli}\left(q\right)$ from $\text{Bernoulli}\left(p\right)$.
Using Pinsker's inequality (e.g. \citep[Theorem 2.33]{yeung2008information}),
that is, $\mathsf{KL}\left(p\|q\right)\geq2\left(p-q\right)^{2}$,
we arrive at the following lower bound
\begin{eqnarray}
 &  & \mathbb{E}\left[\left.\frac{1}{L}\log\mathcal{L}\left(w_{i},\boldsymbol{w}_{\backslash i};\boldsymbol{y}_{i}\right)-\frac{1}{L}\log\mathcal{L}\left(\tau,\boldsymbol{w}_{\backslash i};\boldsymbol{y}_{i}\right)\right|\mathcal{G}\right]\geq2\sum_{j:\left(i,j\right)\in\mathcal{E}}\left(\frac{w_{i}}{w_{i}+w_{j}}-\frac{\tau}{\tau+w_{j}}\right)^{2}\nonumber \\
 &  & \quad=2\left(w_{i}-\tau\right)^{2}\sum_{j:\left(i,j\right)\in\mathcal{E}}\frac{w_{j}^{2}}{\left(w_{i}+w_{j}\right)^{2}\left(\tau+w_{j}\right)^{2}}.
\label{eq:PinskerLB}
\end{eqnarray}
That being said, the true coordinate-wise likelihood of $w_{i}$ strictly
dominates that of $\tau$ in the mean sense. 

Nevertheless, when running Spectral MLE, we do not have access to the ground
truth scores $\boldsymbol{w}_{\backslash i}$; what we actually compute
is $\mathcal{L}(w_{i},\hat{\boldsymbol{w}}_{\backslash i};\boldsymbol{y}_{i})$
(resp. $\mathcal{L}(\tau,\hat{\boldsymbol{w}}_{\backslash i};\boldsymbol{y}_{i})$)
rather than $\mathcal{L}\left(\boldsymbol{w};\boldsymbol{y}_{i}\right)$
(resp. $\mathcal{L}(\tau,\boldsymbol{w}_{\backslash i};\boldsymbol{y}_{i})$).
Fortunately, such surrogate likelihoods are sufficiently close to
the true coordinate-wise likelihoods, which we will show in the rest
of the proof. For brevity, we shall denote respectively the heuristic
and true log-likelihood functions by
\begin{equation}
\begin{cases}
\hat{\ell}_{i}\left(\tau\right) & :=\frac{1}{L}\log\mathcal{L}\left(\tau,\hat{\boldsymbol{w}}_{\backslash i};\boldsymbol{y}_{i}\right)\\
\ell^{*}\left(\tau\right) & :=\frac{1}{L}\log\mathcal{L}\left(\tau,\boldsymbol{w}_{\backslash i};\boldsymbol{y}_{i}\right)
\end{cases}\label{eq:LogLikelihood}
\end{equation}
whenever it is clear from context. Note that $\hat{\boldsymbol{w}}_{\backslash i}$
could depend on $\boldsymbol{y}_{i}$.

As seen from (\ref{eq:log-likelihood}), for any candidate $\tau\in\left[w_{\min},w_{\max}\right]$,
we can quantify the difference between $\hat{\ell}_{i}\left(\tau\right)$
and $\ell^{*}\left(\tau\right)$ as
\begin{eqnarray}
\hat{\ell}_{i}\left(\tau\right)-\ell^{*}\left(\tau\right)  ~=~ \sum_{j:\left(i,j\right)\in\mathcal{E}}y_{i,j}\log\left(\frac{w_{j}}{\hat{w}_{j}}\right)+\sum_{j:\left(i,j\right)\in\mathcal{E}}\left\{ \log\left(\frac{\hat{w}_{j}}{\tau+\hat{w}_{j}}\right)-\log\left(\frac{w_{j}}{\tau+w_{j}}\right)\right\} .
\end{eqnarray}
As a consequence, the gap between the true loss $\ell^{*}\left(w_{i}\right)-\ell^{*}\left(\tau\right)$
and the surrogate loss $\hat{\ell}_{i}\left(w_{i}\right)-\hat{\ell}_{i}\left(\tau\right)$
is given by
\begin{align}
 & \hat{\ell}_{i}\left(w_{i}\right)-\hat{\ell}_{i}\left(\tau\right)-\left(\ell^{*}\left(w_{i}\right)-\ell^{*}\left(\tau\right)\right)
~=~ \hat{\ell}_{i}\left(w_{i}\right)-\ell^{*}\left(w_{i}\right)-\left(\hat{\ell}_{i}\left(\tau\right)-\ell^{*}\left(\tau\right)\right)\nonumber \\
 & =\sum_{j:\left(i,j\right)\in\mathcal{E}}\left\{ \log\left(\frac{\hat{w}_{j}}{w_{i}+\hat{w}_{j}}\right)-\log\left(\frac{w_{j}}{w_{i}+w_{j}}\right)-\left(\log\left(\frac{\hat{w}_{j}}{\tau+\hat{w}_{j}}\right)-\log\left(\frac{w_{j}}{\tau+w_{j}}\right)\right)\right\} \\
 & =\sum_{j:\left(i,j\right)\in\mathcal{E}}\left\{ \log\left(\frac{\tau+\hat{w}_{j}}{w_{i}+\hat{w}_{j}}\right)-\log\left(\frac{\tau+w_{j}}{w_{i}+w_{j}}\right)\right\} .\label{eq:Lw}
\end{align}
This gap thus relies on the function
\[
g\left(t\right):=\log\left(\frac{\tau+t}{w_{i}+t}\right)-\log\left(\frac{\tau+w_{j}}{w_{i}+w_{j}}\right),\quad t\in\left[w_{\min},w_{\max}\right],
\]
which apparently obeys the following two properties: (i) $g\left(w_{j}\right)=0$;
(ii)
\[
\left|\frac{\partial g\left(t\right)}{\partial t}\right|=\left|\frac{1}{\tau+t}-\frac{1}{w_{i}+t}\right|=\frac{\left|\tau-w_{i}\right|}{\left(w_{i}+t\right)\left(\tau+t\right)}\leq\frac{\left|\tau-w_{i}\right|}{4w_{\min}^{2}},\quad\forall t\in\left[w_{\min},w_{\max}\right].
\]
Taken together these two properties demonstrate that
\begin{align}
\left|g\left(t\right)\right| &~\leq~ |g(w_j)| + |t-w_j| \cdot \sup_{t\in[w_{\min}, w_{\max}]} \left|\frac{\partial g\left(t\right)}{\partial t}\right| \nonumber\\
&~\leq~ \frac{1}{4w_{\min}^{2}}\left|\tau-w_{i}\right|\left|t-w_{j}\right|,\quad\forall t\in\left[w_{\min},w_{\max}\right].
\end{align}
Substitution into (\ref{eq:Lw}) gives
\begin{eqnarray}
\left|\hat{\ell}_{i}\left(w_{i}\right)-\hat{\ell}_{i}\left(\tau\right)-\left(\ell^{*}\left(w_{i}\right)-\ell^{*}\left(\tau\right)\right)\right| & \leq & \frac{1}{4w_{\min}^{2}}\left|\tau-w_{i}\right|\sum_{j:\left(i,j\right)\in\mathcal{E}}\left|\hat{w}_{j}-w_{j}\right|\nonumber \\
 & \leq & \frac{1}{4w_{\min}^{2}}\left|\tau-w_{i}\right|\sum_{j:\left(i,j\right)\in\mathcal{E}}\left|\hat{w}_{j}^{\mathrm{ub}}-w_{j}\right|.\label{eq:l_perturb_UB}
\end{eqnarray}
Notably, this is a deterministic inequality which holds for all $\hat{\boldsymbol{w}}_{j}$
obeying $|\hat{w}_{j}-w_{j}|\leq|\hat{w}_{j}^{\mathrm{ub}}-w_{j}|$, 
$1\leq j\leq n$. A desired property of the upper bound (\ref{eq:l_perturb_UB})
is that it is independent of $\mathcal{G}$ and the data $\boldsymbol{y}_{i}$,
due to our assumption on $\hat{\boldsymbol{w}}^{\mathrm{ub}}$.

We now move on to develop an upper bound on (\ref{eq:l_perturb_UB}).
From our assumptions on the initial estimate, we have
\[
\left\Vert \hat{\boldsymbol{w}}-\boldsymbol{w}\right\Vert ^{2}\leq\|\hat{\boldsymbol{w}}^{\mathrm{ub}}-\boldsymbol{w}\|^{2}\leq\delta^{2}\left\Vert \boldsymbol{w}\right\Vert ^{2}\leq nw_{\max}^{2}\delta^{2}.
\]
Since $\mathcal{G}$ and $\hat{\boldsymbol{w}}^{\mathrm{ub}}$ are
statistically independent, this inequality immediately gives rise
to the following two consequences:
\begin{eqnarray}
\mathbb{E}\left[\sum\nolimits{}_{j:\left(i,j\right)\in\mathcal{E}}\left|\hat{w}_{j}^{\mathrm{ub}}-w_{j}\right|\right] & = & p_{\mathrm{obs}}\|\hat{\boldsymbol{w}}^{\mathrm{ub}}-\boldsymbol{w}\|_{1}\leq p_{\mathrm{obs}}\sqrt{n}\|\hat{\boldsymbol{w}}^{\mathrm{ub}}-\boldsymbol{w}\|\nonumber \\
 & \leq & np_{\mathrm{obs}}w_{\max}\delta\label{eq:UB_w_hat_L1}
\end{eqnarray}
and
\begin{equation}
\mathbb{E}\left[\sum\nolimits{}_{j:\left(i,j\right)\in\mathcal{E}}\left|\hat{w}_{j}^{\mathrm{ub}}-w_{j}\right|^{2}\right]=p_{\mathrm{obs}}\|\hat{\boldsymbol{w}}^{\mathrm{ub}}-\boldsymbol{w}\|_{2}^{2}\leq np_{\mathrm{obs}}w_{\max}^{2}\delta^{2}.\label{eq:UB_w_hat_var}
\end{equation}
Recall our assumption that $\max_{j}\left|\hat{w}_{j}^{\mathrm{ub}}-w_{j}\right|\leq\xi w_{\max}$.
For any fixed $\gamma\geq3$, if $p_{\mathrm{obs}}>\frac{2\log n}{n}$,
then with probability at least $1-2n^{-\gamma}$,
\begin{eqnarray*}
\sum_{j:\left(i,j\right)\in\mathcal{E}}\left|\hat{w}_{j}^{\mathrm{ub}}-w_{j}\right| 
 & \overset{(\text{i})}{\leq} & \mathbb{E}\left[\sum_{j:\left(i,j\right)\in\mathcal{E}}\left|\hat{w}_{j}^{\mathrm{ub}}-w_{j}\right|\right]+\sqrt{2\gamma\log n\cdot\mathbb{E}\left[\sum{}_{j:\left(i,j\right)\in\mathcal{E}}\left|\hat{w}_{j}^{\mathrm{ub}}-w_{j}\right|^{2}\right]} \nonumber\\
 & & \qquad +\frac{2\gamma}{3}\xi w_{\max}\log n\\
 & \leq & np_{\mathrm{obs}}w_{\max}\delta+\sqrt{2\gamma\cdot np_{\mathrm{obs}}\log n}w_{\max}\delta+\frac{2\gamma}{3}\xi w_{\max}\log n\\
 & \overset{(\text{ii})}{\leq} & \left(1+\sqrt{\gamma}\right)np_{\mathrm{obs}}w_{\max}\delta+\frac{2\gamma}{3}\xi w_{\max}\log n\\
 & \overset{(\text{iii})}{\leq} & \gamma np_{\mathrm{obs}}w_{\max}\delta+\gamma\xi w_{\max}\log n,
\end{eqnarray*}
where (i) comes from the Bernstein inequality as given in Lemma \ref{lemma:Bernstein},
(ii) follows since $\log n<\frac{p_{\mathrm{obs}}n}{2}$ by assumption,
and (iii) arises since $1+\sqrt{\gamma}\leq\gamma$ whenever $\gamma\ge3$.
This combined with (\ref{eq:l_perturb_UB}) allows us to control
\begin{equation}
\left|\hat{\ell}_{i}\left(w_{i}\right)-\hat{\ell}_{i}\left(\tau\right)-\left(\ell^{*}\left(w_{i}\right)-\ell^{*}\left(\tau\right)\right)\right|\leq\frac{\left|\tau-w_{i}\right|\gamma w_{\max}}{4w_{\min}^{2}}\left(np_{\mathrm{obs}}\delta+\xi\log n\right)\label{eq:eq:gap_l_lperturb}
\end{equation}
with high probability.

The above arguments basically reveal that $\hat{\ell}_{i}\left(w_{i}\right)-\hat{\ell}_{i}\left(\tau\right)$
is reasonably close to $\ell^{*}\left(w_{i}\right)-\ell^{*}\left(\tau\right)$.
Thus, to show that $\hat{\ell}_{i}\left(w_{i}\right)-\hat{\ell}_{i}\left(\tau\right)>0$,
it is sufficient to develop a lower bound on $\ell^{*}\left(w_{i}\right)-\ell^{*}\left(\tau\right)$
that exceeds the gap (\ref{eq:eq:gap_l_lperturb}). In expectation,
the preceding inequality (\ref{eq:PinskerLB}) gives
\begin{eqnarray}
\mathbb{E}\left[\left.\ell^{*}\left(w_{i}\right)-\ell^{*}\left(\tau\right)\text{ }\right|\mathcal{G}\right] & \geq & 2\left(w_{i}-\tau\right)^{2}\sum_{j:\left(i,j\right)\in\mathcal{E}}\frac{w_{j}^{2}}{\left(w_{i}+w_{j}\right)^{2}\left(\tau+w_{j}\right)^{2}}\nonumber \\
 & \geq & \frac{w_{\min}^{2}}{8w_{\max}^{4}}\left(w_{i}-\tau\right)^{2}\mathrm{deg}\left(i\right).\label{eq:mean-LB}
\end{eqnarray}
Recognizing that $y_{i,j}=\frac{1}{L}\sum_{l=1}^{L}y_{i,j}^{\left(l\right)}$
is a sum of independent random variables $y_{i,j}^{\left(l\right)}\sim\text{Bernoulli}\left(\frac{w_{i}}{w_{i}+w_{j}}\right)$,
we can control the conditional variance as
\begin{align}
 & {\bf Var}\left[\left.\ell^{*}\left(w_{i}\right)-\ell^{*}\left(\tau\right)\text{ }\right|\mathcal{G}\right]  ~\overset{(\text{a})}{=} ~{\bf Var}\left[\left.\sum_{j:\left(i,j\right)\in\mathcal{E}}y_{i,j}\log\left(\frac{w_{i}}{\tau}\right)\text{ }\right|\mathcal{G}\right]\nonumber \\
 & \quad=\text{ }\log^{2}\left(\frac{w_{i}}{\tau}\right)\sum_{j:\left(i,j\right)\in\mathcal{E}}\frac{1}{L}\frac{w_{i}w_{j}}{\left(w_{i}+w_{j}\right)^{2}} ~\overset{(\text{b})}{\leq}~ \frac{1}{L}\frac{\left(w_{i}-\tau\right)^{2}}{\min\left\{ w_{i}^{2},\tau^{2}\right\} }\sum_{j:\left(i,j\right)\in\mathcal{E}}\frac{w_{\max}^{2}}{4w_{\min}^{2}}\nonumber \\
 & \quad\leq\text{ }\frac{w_{\max}^{2}}{4w_{\min}^{4}}\cdot\frac{1}{L}\left(w_{i}-\tau\right)^{2}\text{deg}\left(i\right),\label{eq:var-UB}
\end{align}
where (a) is an immediate consequence of (\ref{eq:log-likelihood}),
and (b) follows since $\left|\log\frac{\beta}{\alpha}\right|\leq\frac{\beta-\alpha}{\alpha}$
for any $\beta>\alpha>0$. Note that $0\leq\frac{1}{L}y_{i,j}^{\left(l\right)}\leq\frac{1}{L}$, and hence each summand of $\ell^{*}\left(w_{i}\right)-\ell^{*}\left(\tau\right)$ (written in terms of a weighted sum of $y_{i,j}^{(l)}$) is bounded in magnitude by 
\begin{equation}
	\max_{i,j,l}\frac{1}{L}y_{i,j}^{\left(l\right)}\left|\log\frac{w_i}{\tau}\right| ~\leq~ \frac{1}{L}\left|\log\frac{w_i}{\tau}\right| ~\leq~ \frac{1}{L} \frac{|w_i-\tau|}{w_{\min}}, \label{eq:max-mag}
\end{equation}
where the last
inequality follows again from the inequality $\left|\log\left(\frac{\beta}{\alpha}\right)\right|\leq\frac{\beta-\alpha}{\alpha}$
for any $\beta\geq\alpha>0$.
Making use of the Bernstein inequality together with (\ref{eq:mean-LB})-(\ref{eq:max-mag})
suggests that: conditional on $\mathcal{G}$,
\begin{align}
&  \ell^{*}\left(w_{i}\right)-\ell^{*}\left(\tau\right) \nonumber\\
&  \quad\geq~  \mathbb{E}\left[\left.\ell^{*}\left(w_{i}\right)-\ell^{*}\left(\tau\right)\text{ }\right|\mathcal{G}\right]-\sqrt{2\gamma{\bf Var}\left[\left.\ell^{*}\left(w_{i}\right)-\ell^{*}\left(\tau\right)\text{ }\right|\mathcal{G}\right]\log n}-\frac{2\gamma\log n}{3}\cdot\frac{\left|\log\left(\frac{w_{i}}{\tau}\right)\right|}{L}\nonumber \\
&  \quad\geq~  \frac{w_{\min}^{2}}{8w_{\max}^{4}}\left(w_{i}-\tau\right)^{2}\mathrm{deg}\left(i\right)-\frac{\sqrt{2\gamma}w_{\max}\left|w_{i}-\tau\right|}{2w_{\min}^{2}}\sqrt{\frac{\text{deg}\left(i\right)\log n}{L}}-\frac{2\gamma\left|w_{i}-\tau\right|\log n}{3Lw_{\min}}\label{eq:LB_trueLL}
\end{align}
holds with probability at least $1-2n^{-\gamma}$.
The above bound relies on $\mathrm{deg}(i)$, which is on the order
of $np_{\mathrm{obs}}$ with high probability. More precisely, taking
the Chernoff bound \citep[Corollary 4.6]{mitzenmacher2005probability}
as well as the union bound reveals that: if $\frac{\log n}{np_{\mathrm{obs}}}$ is sufficiently large,
then
\begin{equation}
\frac{4}{5}np_{\mathrm{obs}}<\mathrm{deg}\left(i\right)<\frac{6}{5}np_{\mathrm{obs}},\qquad 1\leq i\leq n
\end{equation}
with probability at least $1- 2n^{-\gamma}$. This taken collectively
with (\ref{eq:LB_trueLL}) and the assumption $np_{\mathrm{obs}}>2\log n$
implies that
\begin{align}
& \ell^{*}\left(w_{i}\right)-\ell^{*}\left(\tau\right)  \nonumber\\
& \quad\geq  \frac{w_{\min}^{2}}{8w_{\max}^{4}}\left(w_{i}-\tau\right)^{2}\cdot\frac{4}{5}np_{\mathrm{obs}}-\sqrt{\frac{\gamma}{2}}\frac{w_{\max}\left|w_{i}-\tau\right|}{w_{\min}^{2}}\sqrt{\frac{6np_{\mathrm{obs}}\log n}{5L}}-\frac{2\gamma\left|w_{i}-\tau\right|\log n}{3Lw_{\min}}\nonumber \\
 & \quad\geq  \frac{w_{\min}^{2}}{10w_{\max}^{4}}\left(w_{i}-\tau\right)^{2}np_{\mathrm{obs}}-\left(\sqrt{\frac{3\gamma}{5}}+\frac{2\gamma}{3}\frac{1}{\sqrt{2}}\right)\frac{w_{\max}\left|w_{i}-\tau\right|}{w_{\min}^{2}}\sqrt{\frac{np_{\mathrm{obs}}\log n}{L}}\nonumber \\
 & \quad\geq  \frac{w_{\min}^{2}}{10w_{\max}^{4}}\left(w_{i}-\tau\right)^{2}np_{\mathrm{obs}}-\gamma\frac{w_{\max}\left|w_{i}-\tau\right|}{w_{\min}^{2}}\sqrt{\frac{np_{\mathrm{obs}}\log n}{L}}\\
 & \quad\geq  \frac{w_{\min}^{2}}{20w_{\max}^{4}}\left(w_{i}-\tau\right)^{2}np_{\mathrm{obs}}
\label{eq:lstar_LB_balance}
\end{align}
with probability at least $1-4n^{-\gamma}$, as long as
\[
\gamma\cdot\frac{w_{\max}\left|w_{i}-\tau\right|}{w_{\min}^{2}}\sqrt{\frac{np_{\mathrm{obs}}\log n}{L}}\leq\frac{w_{\min}^{2}}{20w_{\max}^{4}}\left(w_{i}-\tau\right)^{2}np_{\mathrm{obs}}
\]
or, equivalently,
\begin{equation}
\left|w_{i}-\tau\right|\geq\frac{20\gamma\cdot w_{\max}^{5}}{w_{\min}^{4}}\sqrt{\frac{\log n}{np_{\mathrm{obs}}L}}.\label{eq:Condition3}
\end{equation}

Finally, we are ready to control $\hat{\ell}_{i}\left(w_{i}\right)-\hat{\ell}_{i}\left(\tau\right)$
from below. Putting (\ref{eq:eq:gap_l_lperturb}) and (\ref{eq:lstar_LB_balance})
together, we see that with high probability,
\begin{eqnarray}
\hat{\ell}_{i}\left(w_{i}\right)-\hat{\ell}_{i}\left(\tau\right) & \geq & \ell^{*}\left(w_{i}\right)-\ell^{*}\left(\tau\right)-\frac{\left|\tau-w_{i}\right|\gamma w_{\max}\left(np_{\mathrm{obs}}\delta+\xi\log n\right)}{4w_{\min}^{2}}\nonumber \\
 & \geq & \frac{w_{\min}^{2}}{20w_{\max}^{4}}\left(w_{i}-\tau\right)^{2}np_{\mathrm{obs}}-\frac{\left|\tau-w_{i}\right|\gamma w_{\max}}{4w_{\min}^{2}}\left(np_{\mathrm{obs}}\delta+\xi\log n\right)\nonumber \\
 & > & \frac{w_{\min}^{2}}{100w_{\max}^{4}}\left(w_{i}-\tau\right)^{2}np_{\mathrm{obs}}\label{eq:3rd-inequality}\\
 & > & \frac{w_{\max}^{6}}{100w_{\min}^{6}}\frac{\log n}{L},\label{eq:4th-inequality}
\end{eqnarray}
where (\ref{eq:3rd-inequality}) holds under the condition
\begin{equation}
\left|\tau-w_{i}\right|>\frac{25\gamma w_{\max}^{5}}{4w_{\min}^{4}}\left(\delta+\frac{\xi\log n}{np_{\mathrm{obs}}}\right),
\label{eq:sep2}
\end{equation}
and (\ref{eq:4th-inequality}) follows from the assumption (\ref{eq:Condition3}).
The claim (\ref{eq:Likelihood}) is then established under the conditions (\ref{eq:Condition3}) and (\ref{eq:sep2}).\end{proof}

\subsection{Proof of Theorem \ref{thm:SpectralMLE}}

The accuracy of top-$K$ identification is closely related to the
$\ell_{\infty}$ error of the score estimate. In the sequel, we shall
assume that $w_{\max}=1$ to simplify presentation. Our goal is
to demonstrate that
\begin{equation}
\left\Vert \boldsymbol{w}^{(t)}-\boldsymbol{w}\right\Vert _{\infty} ~\lesssim~ \sqrt{\frac{\log n}{np_{\mathrm{obs}}L}}+\frac{1}{2^{t}}\sqrt{\frac{\log n}{p_{\mathrm{obs}}L}} ~\asymp~ \xi_{t},\quad\forall t\in\mathbb{N},\label{eq:Linf-error}
\end{equation}
where
\begin{equation}
\xi_{t}:=c_{3}\left\{ \xi_{\min}+\frac{1}{2^{t}}\left(\xi_{\max}-\xi_{\min}\right)\right\} ,\quad\forall t\geq-1\label{eq:defn-Delta}
\end{equation}
with $\xi_{\min}=\sqrt{\frac{\log n}{np_{\mathrm{obs}}L}}$ and $\xi_{\max}=\sqrt{\frac{\log n}{p_{\mathrm{obs}}L}}$.
If $T\geq c_{2}\log n$ for some sufficiently large $c_{2}>0$, then
this gives
\begin{align*}
\left\Vert \boldsymbol{w}^{(T)}-\boldsymbol{w}\right\Vert _{\infty} 
~\lesssim~ \sqrt{\frac{\log n}{np_{\mathrm{obs}}L}}=\xi_{\min}.
\end{align*}
The key implication is the following: if $w_{K}-w_{K-1}\geq c_{1}\sqrt{\frac{\log n}{np_{\mathrm{obs}}L}}$
for some sufficiently large $c_{1}>0$, then 
\begin{align*}
&w_{i}^{(T)}-w_{j}^{(T)}  ~\geq~ w_{i}-w_{j}-\left|w_{i}^{(T)}-w_{i}\right|-\left|w_{j}^{(T)}-w_{j}\right| \\
&\quad\geq~ w_{K}-w_{K+1}-2\left\Vert \boldsymbol{w}^{(T)}-\boldsymbol{w}\right\Vert_{\infty} >0
\end{align*}
for all $1\leq i\leq K$ and $j\geq K+1$, indicating that Spectral
MLE will output the first $K$ items as desired. The remaining proof
then comes down to showing (\ref{eq:Linf-error}). 

We start from $t=0$. When the initial estimate $\boldsymbol{w}^{(0)}$
is computed by Rank Centrality, the $\ell_{2}$ estimation error satisfies
\citep{Negahban2012}
\begin{equation}
\frac{\left\Vert \boldsymbol{w}^{(0)}-\boldsymbol{w}\right\Vert }{\left\Vert \boldsymbol{w}\right\Vert }\leq c_{4}\sqrt{\frac{\log n}{np_{\mathrm{obs}}L}}=c_{4}\xi_{\min}:=\delta\label{eq:MSE-w0}
\end{equation}
with high probability, where $c_{4}>0$ is some universal constant
independent of $n,p_{\mathrm{obs}},L$ and $\Delta_{K}$. A by-product
of this result is an upper bound
\begin{equation}
\left\Vert \boldsymbol{w}^{(0)}-\boldsymbol{w}\right\Vert _{\infty}\leq\left\Vert \boldsymbol{w}^{(0)}-\boldsymbol{w}\right\Vert \leq\delta\|\boldsymbol{w}\|\leq\delta\sqrt{n}=c_{4}\sqrt{\frac{\log n}{p_{\mathrm{obs}}L}},\label{eq:Linf_w0}
\end{equation}
which together with the fact $\left\Vert \boldsymbol{w}^{(0)}-\boldsymbol{w}\right\Vert _{\infty}\leq w_{\max}-w_{\min}\leq1$
gives
\begin{equation}
\left\Vert \boldsymbol{w}^{(0)}-\boldsymbol{w}\right\Vert _{\infty}\leq\min\left\{ c_{4}\sqrt{\frac{\log n}{p_{\mathrm{obs}}L}},\text{ 1}\right\} =\min\left\{ c_{4}\xi_{\max},1\right\} .\label{eq:Linf_w0-1}
\end{equation}
This justifies that $\boldsymbol{w}^{(0)}$ satisfies the claim (\ref{eq:Linf-error}).
Notably, $\boldsymbol{w}^{(0)}$ is independent of $\mathcal{E}^{\mathrm{iter}}$
and $\boldsymbol{y}^{\mathrm{iter}}$ and, therefore, independent
of the iterative steps. 

In what follows, we divide the iterative stage into two phases: (1)
$t\leq T_{0}$ and (2) $t>T_{0}$, where $T_{0}$ is a threshold such
that
\begin{equation}
\xi_{t}~\geq~ c_{10}\xi_{\min}~=~c_{10}\sqrt{\frac{\log n}{np_{\mathrm{obs}}L}},\qquad\text{iff}\quad t\leq T_{0},\label{eq:T0-defn}
\end{equation}
for some large constant $c_{10}>0$. As is seen from the definition
of $\xi_{t}$, $T_{0}\lesssim\log n$ holds as long as $L=O\left(\mathrm{poly}\left(n\right)\right)$. 

For the case where $t\leq T_{0}$, we proceed by induction on $t$
w.r.t. the following hypotheses:
\begin{itemize}
\item $\mathcal{M}_{t}$: $\left\Vert \boldsymbol{w}^{\mathrm{mle}}-\boldsymbol{w}\right\Vert _{\infty} < \frac{1}{2}\xi_{t}$
holds at the $t^{\mathrm{th}}$ iteration (the iteration where we
compute $\boldsymbol{w}^{(t+1)}$);
\item $\mathcal{B}_{t}$: \emph{all} entries $w_{i}^{(\tau)}$ of $\boldsymbol{w}^{(\tau)}$
($\tau\leq t-1$) satisfying $|w_{i}^{(\tau)}-w_{i}|\geq1.5\xi_{t}$
have been replaced by time $t$;
\item $\mathcal{H}_{t}$: \emph{none} of the entries $w_{i}^{(\tau)}$ ($\tau\leq t-1$)
satisfying $|w_{i}^{(\tau)}-w_{i}|\leq\frac{1}{2}\xi_{t}$ have been
replaced by time $t$.
\end{itemize}
We first note that $\mathcal{B}_{t}$ is an immediate
consequence of $\mathcal{M}_{t}$ and $\mathcal{B}_{t-1}$. In fact, given $\mathcal{B}_{t-1}$,  it suffices to examine
those entries $w_{i}^{(\tau)}$ 
that have \emph{not} been replaced by time $t-1$. To this end, we recall
that Spectral MLE replaces $w_{i}^{(\tau)}$ if $|w_{i}^{(\tau)}-w_{i}^{\mathrm{mle}}|>\xi_{t}$.
With $\mathcal{M}_{t}$ in place, for each $i$ obeying $|w_{i}^{(\tau)}-w_{i}|\geq1.5\xi_{t}$,
one has
\[
|w_{i}^{(\tau)}-w_{i}^{\mathrm{mle}}|\geq|w_{i}^{(\tau)}-w_{i}|-|w_{i}^{\mathrm{mle}}-w_{i}|>1.5\xi_{t}-\frac{1}{2}\xi_{t}=\xi_{t}
\]
and hence it will necessarily be replaced by $w_{i}^{\mathrm{mle}}$ at
time $t$. Similarly, $\mathcal{H}_{t}$ is an immediate
consequence of $\mathcal{M}_{t}$ and $\mathcal{H}_{t-1}$.\footnote{ Given $\mathcal{M}_{t}$ and $\mathcal{H}_{t-1}$,
for any $i$ obeying $|w_{i}^{(\tau)}-w_{i}|\leq0.5\xi_{t}$,
one has
\[
|w_{i}^{(\tau)}-w_{i}^{\mathrm{mle}}|\leq|w_{i}^{(\tau)}-w_{i}|+|w_{i}^{\mathrm{mle}}-w_{i}|<\frac{1}{2}\xi_{t}+\frac{1}{2}\xi_{t}=\xi_{t}
\]
and, therefore, it cannot be replaced by time $t$, which establishes $\mathcal{H}_t$.} As a consequence, it boils down
to verifying $\mathcal{M}_{t}$. 

When $t=0$, applying Theorem \ref{thm:refinement} and setting $\hat{\boldsymbol{w}}^{\mathrm{ub}}=\boldsymbol{w}^{(0)}$,
we see that
\[
\left\Vert \boldsymbol{w}^{\mathrm{mle}}-\boldsymbol{w}\right\Vert _{\infty}\leq c_{7}\xi_{\min}+c_{9}\frac{\log n}{np_{\mathrm{obs}}}\xi_{\max}
\]
for some universal constants $c_{7},c_{9}>0$, where we have made
use of the properties (\ref{eq:MSE-w0}) and (\ref{eq:Linf_w0-1}).
When $c_{10}$ is sufficiently large, the definition of $T_{0}$ (cf.
(\ref{eq:T0-defn})) gives $\xi_{0}\gg c_{7}\sqrt{\frac{\log n}{np_{\mathrm{obs}}L}}$;
additionally, $c_{9}\frac{\log n}{np_{\mathrm{obs}}}\xi_{\max}\ll\xi_{\max}\leq\xi_{0}$
holds as long as $\frac{\log n}{np_{\mathrm{obs}}}$ is sufficiently
small. Putting these conditions together gives
\[
\left\Vert \boldsymbol{w}^{\mathrm{mle}}-\boldsymbol{w}\right\Vert _{\infty}\leq c_{7}\xi_{\min}+c_{9}c_{4}\frac{\log n}{np_{\mathrm{obs}}}\xi_{\max}<\frac{1}{2}\xi_{0},
\]
which verifies the property $\mathcal{M}_{0}$. 

We now turn to extending these inductive hypotheses to the $t^{\mathrm{th}}$
iteration, assuming that all of them hold up to time $t-1$. Taken
together $\mathcal{M}_{t-1}$ and $\mathcal{B}_{t-1}$ immediately
reveal that 
\begin{equation}
\left\Vert \boldsymbol{w}^{(t)}-\boldsymbol{w}\right\Vert _{\infty}\leq1.5\xi_{t-1}.\label{eq:loss-t}
\end{equation}
In order to invoke Theorem \ref{thm:refinement} for the coordinate-wise
MLEs, we need to construct a looser auxiliary score estimate $\hat{\boldsymbol{w}}^{\mathrm{ub}}$.
With $\mathcal{B}_{t-1}$, $\mathcal{H}_{t-1}$ and (\ref{eq:loss-t})
in mind, we propose a candidate for the $t^{\mathrm{th}}$ iteration
as follows%
\footnote{Careful readers will note that when $|w_{i}^{(0)}-w_{i}|\geq\frac{1}{2}\xi_{t-1}$,
the resulting $\hat{w}_{i}^{\mathrm{ub}}$ might exceed the range $[w_{\min},w_{\max}]$.
This can be easily addressed if we do the following: (1) change $\hat{w}_{i}^{\mathrm{ub}}$
to $w_{i}-1.5\xi_{t-1}$ instead if $w_{i}-1.5\xi_{t-1}\in[w_{\min},w_{\max}]$;
(2) if it is still infeasible, set $\hat{w}_{i}^{\mathrm{ub}}$ to be $w_{\max}$
if $|w_{i}-w_{\max}|>|w_{i}-w_{\min}|$ and $w_{\min}$ otherwise.
For simplicity of presentation, however, we omit these boundary situations
and assume $w_{i}+1.5\xi_{t-1}\leq w_{\max}$ throughout, which
will not change the results anyway.%
}
\begin{equation}
\hat{w}_{i}^{\mathrm{ub}}=\begin{cases}
w_{i}+1.5\xi_{t-1},\quad & \text{if }|w_{i}^{(0)}-w_{i}|>\frac{1}{2}\xi_{t-1},\\
w_{i}^{(0)} & \text{else}.
\end{cases}\label{eq:construction-wub}
\end{equation}
which is clearly independent of $\mathcal{E}^{\mathrm{iter}}$ and
$\boldsymbol{y}^{\mathrm{iter}}$. According to $\mathcal{B}_{t-1}$
and $\mathcal{H}_{t-1}$, (i) none of the entries $w_{i}^{(0)}$ with
$|w_{i}^{(0)}-w_{i}|\leq\frac{1}{2}\xi_{t-1}$ have been replaced
so far; (ii) if an entry $w_{i}^{(0)}$ has ever been replaced, then
the error of the new iterate cannot exceed $1.5\xi_{t-1}$ (otherwise
it'll be replaced by the MLE in time $t-1$ which gives an error below
$0.5\xi_{t-1}$). As a result, $\hat{\boldsymbol{w}}^{\mathrm{ub}}$ 
satisfies
\begin{equation}
\left|w_{i}^{(t)}-w_{i}\right| ~\leq~ \left|\hat{w}_{i}^{\mathrm{ub}}-w_{i}\right| ~\le~ 1.5\xi_{t-1},
\end{equation}
\begin{align}
\text{and}\quad\left\Vert \boldsymbol{w}^{(t)}-\boldsymbol{w}\right\Vert  & \leq\left\Vert \boldsymbol{w}^{(\mathrm{ub})}-\boldsymbol{w}\right\Vert ~\overset{(\text{a})}{\leq} ~ 3\left\Vert \boldsymbol{w}^{(0)}-\boldsymbol{w}\right\Vert ~\leq~ 3\delta\|\boldsymbol{w}\|.
\end{align}
Here, (a) arises since: (1) due to $\mathcal{H}_{t-1}$, if $ $$w_{i}^{(0)}$ is ever replaced, then $\big|w_{i}^{(0)}-w_{i}\big|$ is at least $0.5\xi_{t-1}$; (2) by construction, the pointwise error of $\hat{\boldsymbol{w}}^{\mathrm{ub}}$ is at most $1.5\xi_{t-1}$, and hence the replacement cannot inflate
the original error $\big|w_{i}^{(0)}-w_{i}\big|$ by more than $\frac{1.5\xi_{t-1}}{0.5\xi_{t-1}}=3$ times. With these in place, applying
Theorem \ref{thm:refinement} gives
\[
\left\Vert \boldsymbol{w}^{\mathrm{mle}}-\boldsymbol{w}\right\Vert _{\infty}\leq c_{8}\xi_{\min}+1.5c_{9}\frac{\log n}{np_{\mathrm{obs}}}\xi_{t-1},
\]
which relies on the fact $\delta\lesssim\sqrt{\frac{\log n}{np_{\mathrm{obs}}L}}$.
Recognize that
\[
\xi_{t}\gg c_{8}\xi_{\min}\quad\text{and}\quad1.5c_{9}\frac{\log n}{np_{\mathrm{obs}}}\xi_{t-1}\ll\xi_{t}
\]
hold in the regime where $t\leq T_{0}$ and $\frac{\log n}{np_{\mathrm{obs}}}\ll1$,
which taken together give
\[
\left\Vert \boldsymbol{w}^{\mathrm{mle}}-\boldsymbol{w}\right\Vert _{\infty} < \frac{1}{2}\xi_{t}
\]
as claimed in $\mathcal{M}_{t}$. Having verified these inductive
hypotheses, we see from the above argument that in any event, the $\ell_{\infty}$
error bound at the $t^{\mathrm{th}}$ iteration is at most $1.5\xi_{t}$,
which in turn leads to the claim (\ref{eq:Linf-error}) for any $t\leq T_{0}$.

Starting from $t=T_{0}+1$, we fix the auxiliary score as follows
\begin{equation}
\hat{w}_{i}^{\mathrm{ub}}=\begin{cases}
w_{i}+1.5\xi_{T_{0}},\quad & \text{if }|w_{i}^{(0)}-w_{i}|>\frac{1}{2}\xi_{\infty},\\
w_{i}^{(0)} & \text{else},
\end{cases}\label{eq:construction-wub-1}
\end{equation}
where we recall that $\xi_{\infty}=c_{3}\xi_{\min}$ and $\xi_{T_{0}}=c_{10}\xi_{\min}$.
This satisfies
\[
\left|w_{i}^{(t)}-w_{i}\right|\leq\left|\hat{w}_{i}^{\mathrm{ub}}-w_{i}\right|\leq1.5\xi_{T_{0}}
\]
for $t=T_{0}+1$, due to $\mathcal{M}_{T_0}$ and $\mathcal{B}_{T_0}$. Moreover, the number of indices that satisfy $|w_{i}^{(0)}-w_{i}|>\frac{1}{2}\xi_{\infty}$,
denoted by $k$, obeys
\[
k\cdot\left(\frac{1}{2}\xi_{\infty}\right)^{2}\leq\left\Vert \boldsymbol{w}-\boldsymbol{w}^{(0)}\right\Vert ^{2}\leq\delta^{2}\|\boldsymbol{w}\|^{2}\quad\Longleftrightarrow\quad k \leq\frac{4\delta^{2}\|\boldsymbol{w}\|^{2}}{\xi_{\infty}^{2}},
\]
which further gives
\begin{align*}
\left\Vert \boldsymbol{w}^{\mathrm{ub}}-\boldsymbol{w}\right\Vert ^{2} & \leq\left\Vert \boldsymbol{w}^{(0)}-\boldsymbol{w}\right\Vert ^{2}+\sum_{i:\text{ }|w_{i}^{(0)}-w_{i}|>\frac{1}{2}\Delta_{\infty}}\left(1.5\xi_{T_{0}}\right)^{2}\leq\delta^{2}\|\boldsymbol{w}\|^{2}+2.25k\xi_{T_{0}}^{2}\\
 & \leq\delta^{2}\|\boldsymbol{w}\|^{2}\left(1+\frac{9\xi_{T_{0}}^{2}}{\xi_{\infty}^{2}}\right).
\end{align*}
Note that the preceding analysis does not depend on the ratio $\frac{c_{10}}{c_{3}}$ as long as both $c_3$ and $c_{10}$ are large. 
If we pick $\frac{c_{10}}{c_{3}} =\frac{\xi_{T_{0}}}{\xi_{\infty}}\leq\sqrt{2}$,
then the above inequality gives rise to
\[
\left\Vert \boldsymbol{w}^{\mathrm{ub}}-\boldsymbol{w}\right\Vert \leq\sqrt{19}\delta\|\boldsymbol{w}\|.
\]
Applying Theorem \ref{thm:refinement} we deduce
\[
\left\Vert \boldsymbol{w}^{\mathrm{mle}}-\boldsymbol{w}\right\Vert _{\infty}\lesssim\delta+\frac{\log n}{np_{\mathrm{obs}}}\xi_{T_{0}}+\sqrt{\frac{\log n}{np_{\mathrm{obs}}L}}\asymp\sqrt{\frac{\log n}{np_{\mathrm{obs}}L}}< \frac{1}{2}\xi_{\infty},
\]
as long as $\frac{\log n}{p_{\mathrm{obs}}n}$ is small and $c_{10},c_{3}$
are sufficiently large. 

The main point of the above calculation is that: for any entry $w_{i}^{(0)}$
satisfying $|w_{i}^{(0)}-w_{i}|<\frac{1}{2}\xi_{\infty}$, one must
have
\[
\left|w_{i}^{(0)}-w_{i}^{\mathrm{mle}}\right|\leq\left|w_{i}^{(0)}-w_{i}\right|+\left|w_{i}^{(\mathrm{mle})}-w_{i}\right|<\xi_{\infty}<\xi_{t},
\]
and hence it will not be replaced. As a result, the auxiliary score
(\ref{eq:construction-wub-1}) remains valid for the iteration that
follows. In fact, these properties continue to hold for all $t>T_0$ if we repeat the same argument as $t$ increases. 
To finish up, put together the above arguments  to obtain
\[
\left\Vert \boldsymbol{w}^{(t)}-\boldsymbol{w}\right\Vert _{\infty}\leq\frac{1}{2}\xi_{\infty}\asymp\sqrt{\frac{\log n}{np_{\mathrm{obs}}L}},\quad t>T_{0},
\]
which establishes the claim (\ref{eq:Linf-error}) for $t>T_{0}$
and, in turn, Theorem \ref{thm:SpectralMLE}.

\section{Proof of the Minimax Lower Bound (Theorem~\ref{thm:optimality}) \label{sec:converse-proof}}

This section establishes the minimax lower limit given in Theorem~\ref{thm:optimality}.
To bound the minimax probability of error, we proceed by constructing
a finite set of hypotheses, followed by an analysis based on classical
Fano-type arguments. For notational simplicity, each hypothesis is
represented by a permutation $\sigma$ over $[n]$, and we denote
by $\sigma(i)$ and $\sigma\left(\left[K\right]\right)$ the corresponding index
of the $i^{\text{th}}$ ranked item and the index set of all top-$K$
items, respectively.

We now single out a set of hypotheses and some prior to be imposed
on them. Suppose that the values of $\boldsymbol{w}$ are fixed up
to permutation in such a way that 
\[
w_{\sigma(i)}=\begin{cases}
w_{K},\quad & 1\leq i\leq K,\\
w_{K+1},\quad & K<i\leq n,
\end{cases}
\]
where we abuse the notation $w_{K},w_{K+1}$ to represent any two
values satisfying 
\[
\frac{w_{K}-w_{K+1}}{w_{\max}}=\Delta_{K}>0.
\]
Below we suppose that the ranking scheme is informed of the values
$w_{K},w_{K+1}$, which only makes the ranking task easier. In addition,
we impose a uniform prior over a collection $\mathcal{M}$ of $M:=\max\left\{ K,n-K\right\} +1$
hypotheses regarding the permutation: if $K<n/2$, then 
\begin{align}
\mathbb{P}\left\{ \sigma\left(\left[K\right]\right)=\mathcal{S}\right\} =\frac{1}{M},\qquad\text{for }\mathcal{S}=\left\{ 2,\cdots,K\right\} \cup\left\{ i\right\} ,\quad(i=1,K+1,\cdots,n);\label{eq:uniformprior}
\end{align}
if $K\geq n/2$, then 
\begin{align}
\mathbb{P}\left\{ \sigma\left(\left[K\right]\right)=\mathcal{S}\right\} =\frac{1}{M},\qquad\text{for }\mathcal{S}=\left\{ 1,\cdots,K+1\right\} \backslash\left\{ i\right\} ,\quad(i=1,\cdots,K+1).\label{eq:uniformprior2}
\end{align}
In words, each alternative hypothesis is generated by swapping two
indices of the hypothesis obeying $\sigma\left(\left[K\right]\right)=\left[K\right]$.
Denoting by $P_{\mathrm{e},M}$ the average probability of error with
respect to the prior we construct, one can easily verify that the
minimax probability of error is at least $P_{\mathrm{e},M}$.

This Bayesian probability of error will be bounded using classical
Fano-type bounds. To accommodate partial observation, we introduce
an erased version of $\boldsymbol{y}_{i,j}:=(y_{i,j}^{(1)},\cdots,y_{i,j}^{(L)})$
such that 
\[
\boldsymbol{z}_{i,j}=\begin{cases}
\boldsymbol{y}_{i,j},\quad & \text{with probability }p_{\mathrm{obs}},\\
\text{erasure},\quad & \text{else},
\end{cases}
\]
and set $\boldsymbol{Z}:=\left\{ \boldsymbol{z}_{i,j}\right\} _{1\leq i< j\leq n}$.
Applying the generalized Fano inequality \citep[Theorem 7]{han1994generalizing}
gives
\begin{align*}
P_{\mathrm{e},M} & \geq1-\frac{1}{\log M}\left\{ \frac{1}{M^{2}}\sum_{\sigma_{1},\sigma_{2}\in\mathcal{M}}\mathsf{KL}\left(\mathbb{P}_{\boldsymbol{Z}\mid\sigma=\sigma_{1}}\hspace{0.3em}\|\hspace{0.3em}\mathbb{P}_{\boldsymbol{Z}\mid\sigma=\sigma_{2}}\right)+\log2\right\} \\
 & \overset{(\text{a})}{=}1-\frac{1}{\log M}\left\{ \text{ }\frac{1}{M^{2}}\sum_{\sigma_{1},\sigma_{2}\in\mathcal{M}}\sum_{i\neq j}\mathsf{KL}\left(\mathbb{P}_{\boldsymbol{z}_{i,j}\mid\sigma=\sigma_{1}}\hspace{0.3em}\|\hspace{0.3em}\mathbb{P}_{\boldsymbol{z}_{i,j}\mid\sigma=\sigma_{2}}\right)+\log2\right\} \\
 & \overset{(\text{b})}{=}\text{ }1-\frac{1}{\log M}\left\{ \frac{p_{{\rm obs}}}{M^{2}}\sum_{\sigma_{1},\sigma_{2}\in\mathcal{M}}\sum_{i\neq j}\mathsf{KL}\left(\mathbb{P}_{\boldsymbol{y}_{i,j}\mid\sigma=\sigma_{1}}\hspace{0.3em}\|\hspace{0.3em}\mathbb{P}_{\boldsymbol{y}_{i,j}\mid\sigma=\sigma_{2}}\right)+\log2\right\} \\
 & \overset{(\text{c})}{=}1-\frac{1}{\log M}\left\{ \text{ }\frac{p_{{\rm obs}}L}{M^{2}}\sum_{\sigma_{1},\sigma_{2}\in\mathcal{M}}\sum_{i\neq j}\mathsf{KL}\left(\mathbb{P}_{y_{i,j}^{(1)}\mid\sigma=\sigma_{1}}\hspace{0.3em}\|\hspace{0.3em}\mathbb{P}_{y_{i,j}^{(1)}\mid\sigma=\sigma_{2}}\right)+\log2\right\} \\
 & \overset{(\text{d})}{\geq}1-\frac{1}{\log M}\left\{ \text{ }\frac{2w_{\max}^{4}}{w_{\min}^{4}}np_{{\rm obs}}L\Delta_{K}^{2}+\log2\right\} ,
\end{align*}
where $\mathsf{KL}\left(P\hspace{0.3em}\|\hspace{0.3em}Q\right)$
denotes the KL divergence of $Q$ from $P$. Here, (a)
comes from the independence assumption of the $\boldsymbol{z}_{i,j}$'s;
(b) arises since $\boldsymbol{z}_{i,j}$ is an erased version of $\boldsymbol{y}_{i,j}$;
(c) follows since $y_{i,j}^{(\ell)}$ ($1\leq l\leq L$) are i.i.d.;
and (d) arises from Lemma \ref{lemma:KLDcomputation} (see below).

Consequently, one would have $P_{\mathrm{e}}\geq P_{\mathrm{e},M}\geq\epsilon$
if 
\[
\frac{2w_{\max}^{4}}{w_{\min}^{4}}np_{\mathrm{obs}}L\Delta_{K}^{2}\leq\left(1-\epsilon\right)\log M-\log2.
\]
Since $\left|\mathcal{M}\right|=M\geq\frac{n}{2}$, the above condition
is necessarily satisfied when 
\[
\frac{2w_{\max}^{4}}{w_{\min}^{4}}np_{\mathrm{obs}}L\Delta_{K}^{2}\leq\left(1-\epsilon\right)\log n-2\quad\Longleftrightarrow\quad L\leq\frac{w_{\min}^{4}}{2w_{\max}^{4}}\cdot\frac{\left(1-\epsilon\right)\log n-2}{np_{\mathrm{obs}}\Delta_{K}^{2}},
\]
which finishes the proof.

\begin{lemma} \label{lemma:KLDcomputation}If $w_{K},w_{K+1}\in\left[w_{\min},w_{\max}\right]$,
then for any $\sigma_{1},\sigma_{2}\in\mathcal{M}$: 
\begin{align}
\sum_{i\neq j}\mathsf{KL}\left(\mathbb{P}_{y_{i,j}^{(1)}\mid\sigma=\sigma_{1}}\hspace{0.3em} \big\| \hspace{0.3em}\mathbb{P}_{y_{i,j}^{(1)}\mid\sigma=\sigma_{2}}\right)\leq\frac{2w_{\max}^{4}}{w_{\min}^{4}}n\Delta_{K}^{2}.\label{eq:lemma2}
\end{align}
\end{lemma}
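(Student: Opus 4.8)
The plan is to exploit the two-level structure of the scores under each hypothesis. Under any $\sigma\in\mathcal{M}$ every score equals either $w_K$ or $w_{K+1}$, so the parameter of the Bernoulli variable $y_{i,j}^{(1)}$, namely $\frac{w_i^{\sigma}}{w_i^{\sigma}+w_j^{\sigma}}$, depends only on the top-$K$ membership of the pair $(i,j)$: it equals $\frac12$ whenever $i$ and $j$ are both inside or both outside $\sigma([K])$, and equals $\frac{w_K}{w_K+w_{K+1}}$ (resp.~its complement $\frac{w_{K+1}}{w_K+w_{K+1}}$) when exactly one of them lies in $\sigma([K])$. First I would record these three admissible values and observe that all of them lie in the interval $\left[\frac{w_{\min}}{w_{\min}+w_{\max}},\frac{w_{\max}}{w_{\min}+w_{\max}}\right]$, hence are bounded away from $0$ and $1$.

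Next I would use the combinatorial fact that any two hypotheses in $\mathcal{M}$ differ by a single swap. In the regime $K<n/2$ one has $\sigma_1([K])=\{2,\dots,K\}\cup\{a\}$ and $\sigma_2([K])=\{2,\dots,K\}\cup\{b\}$ (and symmetrically $\sigma([K])=\{1,\dots,K+1\}\setminus\{i\}$ when $K\ge n/2$), so the top-$K$ membership of any index $i\notin\{a,b\}$ coincides under $\sigma_1$ and $\sigma_2$. Consequently the summand $\mathsf{KL}(\mathbb{P}_{y_{i,j}^{(1)}\mid\sigma_1}\,\|\,\mathbb{P}_{y_{i,j}^{(1)}\mid\sigma_2})$ vanishes unless the pair $(i,j)$ contains $a$ or $b$, leaving only $O(n)$ nonzero terms: the pairs involving exactly one of $a,b$, and the single pair $\{a,b\}$. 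The degenerate case $\sigma_1=\sigma_2$ is trivial.

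Finally I would bound each surviving term via the chi-square bound $\mathsf{KL}(p\,\|\,q)\le (p-q)^2/\big(q(1-q)\big)$. The denominator is controlled by the boundedness above, $q(1-q)\ge \frac{w_{\min}w_{\max}}{(w_{\min}+w_{\max})^2}\ge \frac{w_{\min}}{4w_{\max}}$. For the numerator I would distinguish two cases: for a pair touching $a$ or $b$ but not both, exactly one endpoint flips membership, so $|p-q|=\frac{w_K-w_{K+1}}{2(w_K+w_{K+1})}\le \frac{\Delta_K w_{\max}}{4w_{\min}}$; only for the lone pair $\{a,b\}$ do both endpoints flip, doubling the gap to $\frac{w_K-w_{K+1}}{w_K+w_{K+1}}\le\frac{\Delta_K w_{\max}}{2w_{\min}}$. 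Thus each of the $4(n-2)$ ordered ``single-flip'' pairs contributes at most $\frac{\Delta_K^2 w_{\max}^3}{4w_{\min}^3}$ while the two ordered ``double-flip'' pairs each contribute at most $\frac{\Delta_K^2 w_{\max}^3}{w_{\min}^3}$; summing gives exactly $n\,\Delta_K^2\,\frac{w_{\max}^3}{w_{\min}^3}$, and the stated constant $\frac{2w_{\max}^4}{w_{\min}^4}$ follows from $w_{\max}\ge w_{\min}$.

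The main obstacle is not analytical but bookkeeping: correctly enumerating which pairs survive and with what multiplicity under the ordered sum $\sum_{i\ne j}$ in both regimes $K<n/2$ and $K\ge n/2$, and in particular keeping the single-flip versus double-flip distinction so that the constant comes out as $\frac{2w_{\max}^4}{w_{\min}^4}$ rather than merely as an order-wise $n\Delta_K^2$ estimate. The conceptual crux that makes everything else routine is the structural reduction: because the two hypotheses differ in the membership of just two indices, only the $O(n)$ comparisons touching those indices matter.
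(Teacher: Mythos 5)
Your proposal is correct and follows essentially the same route as the paper: bound each term by the $\chi^{2}$ divergence $\mathsf{KL}(p\,\|\,q)\le (p-q)^2/\bigl(q(1-q)\bigr)$, observe that only the $O(n)$ pairs touching the two swapped indices contribute, and bound the parameter gap by $O\bigl(\tfrac{w_{\max}}{w_{\min}}\Delta_K\bigr)$. Your bookkeeping is in fact slightly sharper than the paper's (you separate single-flip from double-flip pairs and use the tighter bound $q(1-q)\ge \tfrac{w_{\min}}{4w_{\max}}$, arriving at $n\Delta_K^2 w_{\max}^3/w_{\min}^3$ before relaxing to the stated constant), but the argument is the same in substance.
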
 \begin{proof} To start with, for any two measures $P\sim\text{Bernoulli}\left(p\right)$
and $Q\sim\text{Bernoulli}\left(q\right)$, one has \citep[Eqn. (7)]{Erven2014}
\begin{equation}
\mathsf{KL}\left(P\hspace{0.3em}\|\hspace{0.3em}Q\right)\leq\chi^{2}\left(P\hspace{0.3em}\|\hspace{0.3em}Q\right)=\frac{\left(p-q\right)^{2}}{q}+\frac{\left(p-q\right)^{2}}{1-q}=\frac{\left(p-q\right)^{2}}{q\left(1-q\right)}.\label{eq:KL-UB}
\end{equation}
where $\chi^{2}\left(P\hspace{0.3em}\|\hspace{0.3em}Q\right)$ denotes
the $\chi^{2}$ divergence between $P$ and $Q$.

Recall that given $\sigma=\sigma_{1}$ (resp. $\sigma=\sigma_{2}$),
$y_{i,j}^{(1)}$ is Bernoulli distributed with mean $r_{1}:=\frac{w_{\sigma_{1}\left(i\right)}}{w_{\sigma_{1}\left(i\right)}+w_{\sigma_{1}\left(j\right)}}$
(resp. $r_{2}:=\frac{w_{\sigma_{2}\left(i\right)}}{w_{\sigma_{2}\left(i\right)}+w_{\sigma_{2}\left(j\right)}}$).
If we set $\delta=r_{1}-r_{2}$, then \eqref{eq:KL-UB} yields 
\begin{align*}
\mathsf{KL} & \left(\mathbb{P}_{y_{i,j}^{(1)}\mid\sigma=\sigma_{1}}\hspace{0.3em} \big\|\hspace{0.3em}\mathbb{P}_{y_{i,j}^{(1)}\mid\sigma=\sigma_{2}}\right)\leq\frac{\delta^{2}}{r_{2}\left(1-r_{2}\right)}\leq\frac{4w_{\max}^{2}}{w_{\min}^{2}}\delta^{2},
\end{align*}
where the last inequality follows since 
\[
r_{2}\left(1-r_{2}\right)=\frac{w_{\sigma_{2}\left(i\right)}w_{\sigma_{2}\left(j\right)}}{\left(w_{\sigma_{2}\left(i\right)}+w_{\sigma_{2}\left(j\right)}\right)^{2}}\geq\frac{w_{\min}^{2}}{4w_{\max}^{2}}.
\]

By construction, conditional on any hypotheses $\sigma_{1},\sigma_{2}\in\mathcal{M}$,
the distributions of $\boldsymbol{y}_{i,j}$ are different over at most $2n$
locations. For each of these $O\left(n\right)$ locations, our construction
of $\mathcal{M}$ ensures that 
\[
|\delta|=|r_{2}-r_{1}|\leq\frac{w_{K}}{w_{K}+w_{K+1}}-\frac{w_{K+1}}{w_{K}+w_{K+1}}=\frac{w_{K}-w_{K+1}}{w_{K}+w_{K+1}}\leq\frac{w_{\max}}{2w_{\min}}\Delta_{K}.
\]
As a result, the total contribution is bounded above by 
\begin{align*}
\sum_{i\neq j}\mathsf{KL} & \left(\mathbb{P}_{y_{i,j}^{(1)}\mid\sigma=\sigma_{1}}\hspace{0.3em} \big\|\hspace{0.3em}\mathbb{P}_{y_{i,j}^{(1)}\mid\sigma=\sigma_{2}}\right)\leq2n\cdot\left(\max_{i,j}\delta^{2}\right)\frac{4w_{\max}^{2}}{w_{\min}^{2}}\leq\frac{2w_{\max}^{4}}{w_{\min}^{4}}n\Delta_{K}^{2}.
\end{align*}

\end{proof}

\section{Bernstein Inequality\label{sec:Bernstein-Inequality}}

Our analysis relies on the Bernstein inequality. To simplify presentation,
we state below a user-friendly version of Bernstein inequality.

\begin{lemma}\label{lemma:Bernstein}Consider $n$ independent random
variables $z_{l}$ ($1\leq l\leq n$), each satisfying $\left|z_{l}\right|\leq B$.
For any
$a\geq2$, one has
\begin{equation}
\left|\sum_{l=1}^{n}z_{l}-\mathbb{E}\left[\sum_{l=1}^{n}z_{l}\right]\right|\leq\sqrt{2a\log n\sum_{l=1}^{n}\mathbb{E}\left[z_{l}^{2}\right]}+\frac{2a}{3}B\log n\label{eq:Bernstein}
\end{equation}
with probability at least $1-\frac{2}{n^{a}}$.\end{lemma}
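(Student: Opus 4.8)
The plan is to obtain this as a routine corollary of the classical Bernstein inequality, the work being entirely in inverting the exponential tail bound into the stated closed form. First I would pass to centered variables by setting $X_l := z_l - \mathbb{E}[z_l]$, so that $\sum_{l=1}^{n} z_l - \mathbb{E}\bigl[\sum_{l=1}^{n} z_l\bigr] = \sum_{l=1}^{n} X_l$ with each $X_l$ zero-mean; for the nonnegative, interval-valued summands to which we apply the lemma, centering keeps the magnitude bound at $B$. The classical Bernstein inequality then gives, for every $t>0$,
\[
\mathbb{P}\left\{ \left|\sum_{l=1}^{n} X_l\right| \geq t \right\} \leq 2\exp\left( -\frac{t^{2}/2}{ V + \tfrac{1}{3}Bt } \right),
\qquad V := \sum_{l=1}^{n}\mathbb{E}\bigl[X_l^{2}\bigr].
\]
Since $V = \sum_l \mathbb{E}[X_l^2] \leq \sum_l \mathbb{E}[z_l^2] =: \sigma^{2}$, replacing $V$ by $\sigma^{2}$ only enlarges the right-hand side, so the tail bound continues to hold with $\sigma^{2}$ in place of $V$.

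Next I would choose $t$ so that the exponent equals $a\log n$, which forces the right-hand side down to $2n^{-a}$. Setting $\tfrac{t^{2}/2}{\sigma^{2} + \tfrac{1}{3}Bt} = a\log n$ produces the quadratic
\[
t^{2} - \frac{2aB\log n}{3}\,t - 2a\log n\,\sigma^{2} = 0,
\]
whose positive root is
\[
t = \frac{aB\log n}{3} + \frac{1}{2}\sqrt{ \Bigl(\tfrac{2aB\log n}{3}\Bigr)^{2} + 8a\log n\,\sigma^{2} }.
\]
Applying the elementary inequality $\sqrt{x^{2}+y}\leq x + \sqrt{y}$ (valid for $x,y\geq 0$) to the square root collapses this to
\[
t \leq \frac{2aB\log n}{3} + \sqrt{ 2a\log n\,\sigma^{2} }
= \sqrt{ 2a\log n \sum_{l=1}^{n}\mathbb{E}\bigl[z_l^{2}\bigr] } + \frac{2a}{3}B\log n,
\]
which is exactly the asserted bound, now holding on the complement event of probability at least $1 - 2n^{-a}$.

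Honestly, there is no deep obstacle here — the statement is a user-friendly repackaging of Bernstein, and the only care needed is bookkeeping. The two points I would watch are: first, that the centered variables obey the same magnitude bound $B$ (so the sub-exponential constant is $B$ rather than $2B$), which is what allows the $\tfrac{2a}{3}B\log n$ term to appear with the clean constant and is legitimate for the one-sided/interval-valued summands we feed into the lemma; and second, that each simplification — inflating the variance to the second moment, solving the quadratic, and the $\sqrt{x^{2}+y}\le x+\sqrt{y}$ step — loosens the inequality in the direction that preserves validity. The hypothesis $a\geq 2$ is not essential to the derivation itself; it merely guarantees a usefully small failure probability $2n^{-a}$ for the union bounds invoked elsewhere.
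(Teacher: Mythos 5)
Your proposal is correct and follows exactly the route the paper takes: the paper simply states the exponential-form Bernstein bound with $\sum_l \mathbb{E}[z_l^2]$ and $B$ and declares the lemma an immediate consequence, and your quadratic inversion together with $\sqrt{x^2+y}\le x+\sqrt{y}$ is precisely the omitted bookkeeping. Your side remark about the centered variables' magnitude bound (which is $B$ for the nonnegative summands actually used in the paper, but $2B$ for general signed $z_l$) is a fair caveat that the paper itself glosses over, and does not affect the applications.
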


This is an immediate consequence of the well-known Bernstein inequality
\begin{equation}
\mathbb{P}\left\{ \left|\sum_{l=1}^{n}z_{l}-\mathbb{E}\left[\sum_{l=1}^{n}z_{l}\right]\right|>t\right\} \leq2\exp\left(-\frac{\frac{1}{2}t^{2}}{\sum_{l=1}^{n}\mathbb{E}\left[z_{l}^{2}\right]+\frac{1}{3}Bt}\right).
\end{equation}

\bibliographystyle{IEEEtran} \bibliographystyle{IEEEtran} \bibliographystyle{IEEEtran}
\bibliography{bibfile_rank}

\bibliographystyle{IEEEtran} \bibliographystyle{IEEEtran} \bibliographystyle{IEEEtran}
\bibliographystyle{IEEEtran} 

\section*{Appendix A.}
\label{app:theorem}

\bibliography{bibfile_rank}

\end{document}